\DeclareSymbolFont{bbold}{U}{bbold}{m}{n}
\DeclareSymbolFontAlphabet{\mathbbold}{bbold}
\numberwithin{equation}{section}
\theoremstyle{plain}
\newtheorem{corollary}{Corollary}
\newtheorem{lemma}{Lemma}
\newtheorem{theorem}{Theorem}
\newtheorem{fact}{Fact}
\theoremstyle{definition}
\newtheorem{definition}{Definition}
\theoremstyle{remark}
\numberwithin{corollary}{section}
\numberwithin{lemma}{section}
\numberwithin{theorem}{section}
\numberwithin{assumption}{section}
\numberwithin{fact}{section}
\numberwithin{definition}{section}
\numberwithin{example}{section}
\numberwithin{conjecture}{section}
\numberwithin{remark}{section}
\numberwithin{claim}{section}
\newcommand \E {\mathop{\mbox{\ensuremath{\mathbb{E}}}}\nolimits}
\newcommand \CA {{\mathcal{A}}}
\newcommand \defn {\mathrel{\triangleq}}
\newcommand \argmax{\mathop{\rm arg\,max}}
\DeclareMathAlphabet{\mathpzc}{OT1}{pzc}{m}{it}
\newcommand \Laplace {\mathop{\mathpzc{Lap}}\nolimits}
\newcommand \nactions {\ensuremath{K}}
\newcommand \horizon {\ensuremath{T}}
\newcommand \pol {\pi}
\newcommand \arm {a}
\newcommand \pMean {X}
\newcommand \eMean {Y}
\newcommand \vpMean {\vectorsym{\pMean}}
\newcommand \veMean {\vectorsym{\eMean}}
\newcommand \dpbound {h}
\newcommand \cbound {c}
\newcommand \nat {n_{a,t}} 
\newcommand \Mean {\mu}
\newcommand \rt {\vectorsym{r}_t}
\DeclarePairedDelimiter{\abs}{\lvert}{\rvert}
\DeclarePairedDelimiter{\ceil}{\lceil}{\rceil}
\tikzstyle{utility}=[diamond,draw=black,draw=blue!50,fill=blue!10,inner sep=0mm, minimum size=8mm]
\tikzstyle{select}=[rectangle,draw=black,draw=blue!50,fill=blue!10,inner sep=0mm, minimum size=6mm]
\tikzstyle{hidden}=[dashed,draw=black]
\tikzstyle{RV}=[circle,draw=black,draw=blue!50,fill=blue!10,inner sep=0mm, minimum size=6mm]
\def\clap#1{\hbox to 0pt{\hss#1\hss}}
\let\OldStatex\Statex
\renewcommand{\Statex}[1][3]{%
	\setlength\@tempdima{\algorithmicindent}%
	\OldStatex\hskip\dimexpr#1\@tempdima\relax}
\newcommand \DPUCBi {\textsc{DP-UCB-Int}}
\newcommand \DPUCB {\textsc{DP-UCB}}
\newcommand \DPUCBb {\textsc{DP-UCB-Bound}}
\begin{document}

\title{Algorithms for Differentially Private Multi-Armed Bandits}
\author
{Aristide C. Y. Tossou and Christos Dimitrakakis
	\\\\
	Chalmers University of Technology, Gothenburg, Sweden\\\\
	\{aristide, chrdimi\}@chalmers.se
}

\maketitle

\begin{abstract}
  We present differentially private algorithms for the stochastic
  Multi-Armed Bandit (MAB) problem. This is a problem for applications
  such as adaptive clinical trials, experiment design, and
  user-targeted advertising where private information is connected to
  individual rewards. Our major contribution is to show that there
  exist $(\epsilon, \delta)$ differentially private variants of Upper
  Confidence Bound algorithms which have optimal regret,
  $O(\epsilon^{-1} + \log T)$. This is a significant improvement over
  previous results, which only achieve poly-log regret
  $O(\epsilon^{-2} \log^{2} T)$, because of our use of a novel
  interval-based mechanism. We also substantially improve the
  bounds of previous family of algorithms which use a continual release
  mechanism. Experiments clearly validate our theoretical bounds. 
\end{abstract}

\section{Introduction}

The well-known stochastic $\nactions$-armed bandit
problem~\cite{thompson1933lou,robbins1952some}
involves an agent sequentially
choosing among a set of arms $\CA = \{1, \ldots, \nactions\}$, and
obtaining a sequence of scalar rewards $\{r_t\}$, such that, if the
agent's action at time $t$ is $a_t = i$, then it obtains reward $r_t$
drawn from some distribution $P_{i}$
with expectation $\Mean_i \defn \E(r_t \mid a_t = i)$. The goal
of the decision maker is to draw arms so as to maximize the total
reward $\sum_{t=1}^T r_t$ obtained.

This problem is a model for many applications where there is a need
for trading-off exploration and exploitation. This occurs because we
only see the reward of the arm we pull. An example is clinical trials,
where arms correspond to different treatments or tests, and the goal
can be to maximise the number of cured patients over time while being
uncertain about the effects of treatments.  Other problems, such as
search engine advertisement and movie recommendations can be
formalised similarly \cite{handlingads}.

It has been previously noted~\cite{dponlinelearning,dplearningbanditandfull,mishra2015nearly,dpsmartgrid} that privacy
is an important consideration for many multi-armed bandit applications. Indeed, privacy can be easily violated by observing changes in the prediction of the bandit algorithm. This has been demonstrated for recommender systems such as Amazon by \cite{privacyleakamazon} and for  user-targeted advertising such as Facebook by \cite{privacyleakfacebook}. In both cases, with a moderate amount of side information and by tracking changes in the output of the system, it was possible to learn private information of any targeted user.

Differential privacy (DP) \cite{dwork06dp} provides an answer to this privacy issue by making the output of an algorithm almost insensitive to any single user information. That is, no matter what side information is available to an outside observer, he can not have more information about a user than he already had by observing the outputs released by the algorithm. This goal is achieved by formally bounding the loss in privacy through the use of two parameters $(\epsilon, \delta)$ as shown in Definition \ref{def:dp}.

For bandit problems, differential privacy implies that the actions taken by the
bandit algorithm do not reveal information about the sequence of
rewards obtained. In the context of clinical trials and diagnostic
tests, it guarantees that even an adversary with arbitrary side
information, such as the identity of each patient, cannot learn
anything from the output of the learning algorithm about patient
history, condition, or test results.

\subsection{Related Work}
\label{sec:related-work}
Differential privacy (DP) was introduced by \cite{noisedpdwork}; a
good overview is given in \cite{dpbook}. While initially the focus in
DP was static databases, interest in its relation to online learning
problems has increased recently. In the full information setting,
\cite{dponlinelearning} obtained differentially private algorithms
with near-optimal bounds.  In the bandit setting,
\cite{dplearningbanditandfull} were the first to present a
differentially private algorithm, for the adversarial case, while
\cite{dpsmartgrid} present an application to smart grids in this
setting.  Then,
\cite{mishra2015nearly} provided a differentially private algorithm
for the stochastic bandit problem.  Their algorithms are based on two
non private stochastic bandit algorithms: Upper Confidence Bound
(UCB,~\cite{finitetimemab}) and Thompson
sampling~\cite{thompson1933lou}.  Their results are sub-optimal:
although simple index-based algorithms achieving $O(\log T)$ regret
exist~\cite{burnetas1996optimal,finitetimemab}, these differentially
private algorithms additional poly-log terms in time $T$, as well
further linear terms in the number of arms compared to the non-private
optimal regret $O(\log T)$.

We provide a significantly different and improved UCB-style algorithm whose regret only adds a constant, privacy-dependent term
to the optimal.  We also improve upon previous algorithms by relaxing
the need to know the horizon $T$ ahead of time, and as a result we
obtain a uniform bound. Finally, we also obtain significantly improved
bounds for a variant of the original algorithm
of~\cite{mishra2015nearly}, by using a different proof technique and
confidence intervals.  Let's note that similarly to their result, we
only make distributional assumptions on the data for the regret
analysis. To ensure privacy, our algorithms do not make any assumption
on the data. We summarize our contributions in the next section.

\subsection{Our Contributions}

\begin{itemize}
	\item We present a novel differentially private algorithm (\DPUCBi{}) in the stochastic bandit setting that is almost optimal and only add an \emph{additive constant} term (depending on the privacy parameter) to the optimal non private version. Previous  algorithms had in large \emph{multiplicative factors} to the optimal.
	
	\item We also provide an incremental but important improvement to the regret of existing differentially private algorithm in the stochastic bandit using the same family of algorithms as previously presented in the literature. This is done by using a simpler confidence bound and a more sophisticated proof technique. These bounds are achieved by \DPUCBb{} and \DPUCB{} algorithms.

	\item We present the first set of differentially private algorithm in the bandit setting which are unbounded and do not require the knowledge of the horizon $T$. Furthermore, all our regret analysis holds for any time step $t$.
	
\end{itemize}

\section{Preliminaries}

\subsection{Multi-Armed Bandit}
The well-known stochastic $\nactions$-armed bandit problem~\cite{thompson1933lou,lai1985asymptotically,finitetimemab} involves an agent sequentially choosing among a set of $\nactions$ arms $\CA = \{1, \ldots, \nactions\}$. At each time step $t$, the player selects an action $a_t = i \in \CA$  and obtains a reward $r_t \in [0,1]$.  The  reward $r_t$ is drawn from some fixed but unknown distribution $P_{i}$ such that $\E(r_t \mid a_t) = \Mean_i$. The goal of the decision maker is to draw arms so as to maximize the  total reward obtained after $T$ interactions. An equivalent notion is to minimize the total regret against an agent who knew the arm with the maximum expectation before the game starts and always plays it. This is defined by:
\begin{equation}
\mathcal{R} \defn
\horizon
\Mean_{*}
-
\E^\pol
\sum_{t=1}^\horizon
\rt
\label{eq:expected-regret}.
\end{equation}
where $\Mean_{*} \defn \max_{a \in \CA} \Mean_{a}$ is the mean reward of the optimal arm 
and  $\pol(a_t \mid a_{1:t-1}, r_{1:t-1})$ is the policy of the decision maker, defining a probability distribution on the next actions $a_t$ given the history of previous actions $a_{1:t-1} = a_1, \ldots, a_{t-1}$ and rewards $r_{1:t-1}  = r_1, \ldots r_{t-1}$.
Our goal is to bound the regret uniformly over $T$.

\subsection{Differential Privacy}
Differential privacy was originally proposed by~\cite{dwork06dp},
as a way to formalise the amount of information about the \emph{input}
of an algorithm, that is leaked to an adversary observing its
\emph{output}, no matter what the adversary's side information is. In
the context of our setup, the algorithm's input is the sequence of
rewards, and its output the actions. Consequently, we use the following definition of differentially private bandit algorithms.
\begin{definition}[($\epsilon, \delta)$-differentially private bandit algorithm]
  A bandit algorithm $\pi$ is $(\epsilon, \delta)$-differentially
  private if for all sequences $r_{1:t-1}$ and $r'_{1:t-1}$ that
  differs in at most one time step, we have for all $S
  \subseteq \CA$:
\[
\pol(a_t \in S \mid a_{1:t-1}, r_{1:t-1})
\leq 
\pol(a_t \in S \mid a_{1:t-1}, r'_{1:t-1})
e^\epsilon + \delta
\label{eq:dp-bandit}
\]
 where $\CA$ is the set of actions. When $\delta = 0$, the algorithm is said to be \emph{$\epsilon$-differential private}.
 \label{def:dp}
\end{definition}
Intuitively, this means that changing any reward $r_t$ for a given arm, will not change too much the best arm released at time $t$ or later on. If each $r_t$ is a private information or a point associated to a single individual, then the definition aboves means that the presence or absence of that individual will not affect too much the output of the algorithm. Hence, the algorithm will not reveal any extra information about this individual leading to a privacy protection.
The privacy parameters $(\epsilon, \delta)$ determines the extent to which an individual entry affects the output; lower values of $(\epsilon, \delta)$ imply higher levels of privacy.

A natural way to obtain privacy is to add a noise such as Laplace noise ($\Laplace{}$) to the output of the algorithm. The main challenge is how to get the maximum privacy while adding a minimum amount of noise as possible. This leads to a trade off between privacy and utility. In our paper, we demonstrated how to optimally trade-off this two notions.

 \subsection{Hybrid Mechanism}
 
 The hybrid mechanism is an online algorithm used to continually
 release the sum of some statistics while preserving differential
 privacy. More formally, there is a stream $\sigma_t = {r_1, r_2
   \ldots r_t}$ of statistics with $r_i$ in $[0, 1]$. At each time
 step $t$ a new statistic $r_t$ is given. The goal is to output the
 partial sum ($y_t = \sum_{i=1}^{t} r_i$) of the statistics from time
 step 1 to $t$ without compromising privacy of the statistics. In other
 words, we wish to find a randomised mechanism $M(y_t \mid \sigma_t,
 y_{1:t-1})$ that is $(\epsilon, \delta)$-differential private.

The hybrid mechanism solves this problem by combining the Logarithm and Binary Noisy Sum mechanisms.
Whenever $t = 2^k$ for some integer $k$, it uses the Logarithm mechanism to release a noisy sum by adding Laplace noise of scale $\epsilon^{-1}$. It then builds a binary tree $B$, which is used to release noisy sums until $t = 2^{k+1}$ via the Binary mechanism.
 This uses the leaf nodes of $B$ to store the inputs $r_i$, while all other nodes store partial sums, with the root containing the sum from $2^k$ to $2^{k+1} -1$. Since the tree depth is logarithmic, there is only a logarithmic amount of noise added for any given sum, 
more specifically Laplace noise of scale $\frac{\log t}{\epsilon}$ and mean $0$ which is denoted by $\Laplace(\frac{\log t}{\epsilon})$.

\cite{chan2010private} proves that the hybrid mechanism is $\epsilon$-differential private for any $n$ where $n$ is the number of statistics seen so far. They also show that with  probability at least $1-\gamma$, the error in the released sum is upper bounded by $\frac{1}{\epsilon} \log(\frac{1}{\gamma}) \log^{1.5} n$. 
In this paper, we derived and used a tighter bound for this same mechanism (see Appendix in Supplementary Material) which is:
\begin{equation}
\frac{\sqrt{8}}{\epsilon} \log(\frac{4}{\gamma}) \log n + \frac{\sqrt{8}}{\epsilon} \log(\frac{4}{\gamma})
\label{hybrid-mechanism-new-bound}
\end{equation}

\section{Private Stochastic Multi-Armed Bandits}
\label{sec:priv-distr-multi}

We describe here the general technique used by our algorithms to obtain differential privacy.
Our algorithms are based on the non-private UCB algorithm by \cite{finitetimemab}. At each time step, UCB based its action according to an optimistic estimate of the expected reward of each arm. This estimate is the sum of the empirical mean and an upper bound confidence  equal to $\sqrt{\frac{2\log t}{\nat}}$ where $t$ is the time step and $\nat$ the number of times arm $\arm$ has been played till time $t$.  We can observe that the only quantity using the value of the reward is the  empirical mean. To achieve  differential privacy, it is enough to make the player based its action on  \emph{differentially private} empirical
means for each arm. This is so, because, once the mean of each arm is computed, the action which will be played is a deterministic function of the means. In particular, we can see the differentially private
mechanism as a black box, which keeps track of the vector of
non-private empirical means $\veMean$ for the player, and outputs a vector
of private empirical means $\vpMean$. This is then used by the player to
select an action, as shown in
Figure~\ref{fig:graphical-model}.

We provide three different algorithms that use different techniques to privately compute the mean and calculate the index of each arm. The first, \DPUCBb{}, employs the Hybrid mechanism to compute a private mean and then adds a suitable term to the confidence bound to take into account the additional uncertainty due to privacy. The second, \DPUCB{} employs the same mechanism, but in such a way so as all arms have the same privacy-induced uncertainty; consequently the algorithm then uses the same index as standard UCB. The final one, employs a mechanism that only releases a new mean once at the beginning of each interval. This allows us to obtain the optimal regret rate.


\begin{figure}[ht]
  \centering
  \begin{tikzpicture}
    \node[select] at (0,1.5) (at) {$a_t$};
    \node[utility,hidden] at (2,1.5) (rt) {$r_t$};
    \node[RV,hidden] at (0,0) (m) {$P_i$};
    \node[RV,hidden] at (2,0) (em) {$\veMean$};
    \node[RV] at (4,0) (pm) {$\vpMean$};
    \draw[->] (at) -- (rt);
    \draw[->] (m) -- (rt);
    \draw[->] (rt) -- (em);
    \draw[->] (em) -- (pm);
    \node (slab) [draw=red, fit= (at) (rt), inner sep=1em] {};
    \node [yshift=-2ex, blue] at (slab.north) {$t \in [T]$};
  \end{tikzpicture}
  \caption{Graphical model for the empirical and private means. $a_t$ is the action of the agent, while $r_t$ is the reward obtained, which is drawn from the bandit distribution $P_i$. The vector of empirical means $\veMean$ is then made into a private vector $\vpMean$ which the agent uses to select actions. The rewards are essentially hidden from the agent by the DP mechanism.} 
  \label{fig:graphical-model}
\end{figure}
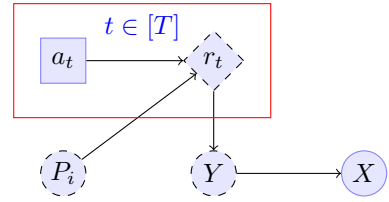

\subsection{The \DPUCBb{} Algorithm}
\label{sec:private-hybrid-mab}

\begin{algorithm}                      
  \caption{\DPUCBb{}}          
  \label{alg1}                           
  \begin{algorithmic}
  	 \State \textbf{Input} $\epsilon$, the differential privacy parameter.
    \State Instantiate  $\nactions$ Hybrid Mechanisms \cite{chan2010private}; one for each arm  $a$.
    
    \For{$t \gets 1$ to $T$}
    \If {$t \leq \nactions$}
    \State play  arm $a = t$ and observe the reward $r_t$
    \State Insert $r_t$ to the hybrid mechanism $a$

    \Else

	\ForAll{arm $a$}
    \State $s_a(t) \gets$ total sum computed using the hybrid 
    \Statex[4] $\qquad$ mechanism $a$
    
    \If{$\nat$ is a power of 2}
    
     \State $\nu_a \gets \frac{\sqrt{8}}{\epsilon} \log(4t^4)$
     
    \Else
    
    \State $\nu_a \gets \frac{\sqrt{8}}{\epsilon} \log(4t^4) \log \nat + \frac{\sqrt{8}}{\epsilon} \log(4t^4)$
    
    \EndIf
    
    \EndFor

    \State Pull arm 
    $a_t = \argmax_a \frac{s_a(t)}{\nat}  +  \sqrt{\frac{2\log{t}}{\nat}} + \frac{\nu_a}{\nat}$
    \State Observe the reward $r_{t}$
	\State Insert $r_{t}$ to the hybrid mechanism for arm $a_t$
    \EndIf
    \EndFor

  \end{algorithmic}
\end{algorithm}

In Algorithm \ref{alg1}, we compute the sum of the rewards of each arm using the hybrid mechanism \cite{chan2010private}. However, the number and the variance of Laplace noise added by the hybrid mechanism increases as we keep pulling an arm. This means that the sum of each arm get added different amount of noise bigger than the original confidence bound used by UCB. This makes it difficult to identify the best arms. To solve this issue, we add a tight upper bound defined in equation \eqref{hybrid-mechanism-new-bound} on the noise added by the hybrid mechanism.

Theorem \ref{theo:bound}  validates this choice by showing that only $O( \epsilon^{-1} \log \log t)$ factors are added to the optimal non private regret. In theorem \ref{theo:privacy_algo1}, we demonstrate that Algorithm \ref{alg1} is indeed $\epsilon$-differential private.

\begin{theorem}
  Algorithm \ref{alg1} is $\epsilon$-differential private  after any number of $t$ of plays.
  \label{theo:privacy_algo1}
\end{theorem}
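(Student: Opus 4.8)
The plan is to reduce the privacy of the whole algorithm to the privacy of a single hybrid mechanism, exploiting two facts: (i) the action $a_t$ is a deterministic function of the privately released sums, so it is a \emph{post-processing} of their output; and (ii) each observed reward is inserted into exactly one arm's mechanism, so two neighbouring reward sequences perturb the input of only one mechanism. Together these mean the privacy loss is that of a single hybrid mechanism, namely $\epsilon$, with no dependence on $\nactions$ or on $t$.

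First I would invoke the conditioning built into Definition~\ref{def:dp}: we fix the past actions $a_{1:t-1}$ and compare two reward sequences $r_{1:t-1}$ and $r'_{1:t-1}$ differing in a single coordinate, say at time $j$. Because the past actions are held fixed, they induce a fixed partition of the rewards into the $\nactions$ per-arm input streams, the reward $r_i$ being fed to the hybrid mechanism of arm $a_i$. Since $a_j$ is the only arm whose stream sees a changed entry, the inputs to every other mechanism are identical under $r_{1:t-1}$ and $r'_{1:t-1}$, while the mechanism for arm $a_j$ receives two neighbouring input streams.

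Next I would apply the guarantee of \cite{chan2010private}, which states that the hybrid mechanism is $\epsilon$-differentially private for any number of inputs. Hence the distribution of the released sums for arm $a_j$ satisfies the $\epsilon$-DP inequality with respect to the one-entry change, and in particular so does its current value $s_{a_j}(t)$, while the released sums $s_a(t)$ of all other arms are unchanged. Therefore the vector $\bigl(s_1(t),\dots,s_{\nactions}(t)\bigr)$ is $\epsilon$-differentially private. Finally, the chosen action
\[
a_t = \argmax_a \frac{s_a(t)}{\nat} + \sqrt{\frac{2\log t}{\nat}} + \frac{\nu_a}{\nat}
\]
is a deterministic function of this vector together with the counts $\nat$ and the correction terms $\nu_a$, all of which depend only on $t$ and on the fixed past actions and are therefore identical across the two sequences. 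By closure of differential privacy under post-processing, the distribution of $a_t$, hence of the event $a_t \in S$ for any $S \subseteq \CA$, inherits the $\epsilon$-DP bound, which is exactly the claim.

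The step I expect to require the most care is the ``parallel composition'' argument that keeps the constant at $\epsilon$ rather than letting it grow with $\nactions$ or accumulate over time. The potential circularity — the partition of rewards across arms depends on actions, which themselves depend on the private outputs — is resolved precisely by conditioning on $a_{1:t-1}$ as in Definition~\ref{def:dp}, which freezes the partition. One should also confirm that the continual-release nature of the hybrid mechanism does not inflate the bound: since \cite{chan2010private} certifies $\epsilon$-DP uniformly over the entire released stream regardless of its length, marginalising to the value $s_{a_j}(t)$ used at step $t$ preserves $\epsilon$-DP, giving a guarantee that holds uniformly in $t$.
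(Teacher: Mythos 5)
Your proposal is correct and follows essentially the same route as the paper's own proof: the hybrid mechanism's $\epsilon$-DP guarantee for streams of any length, the observation that a single changed reward perturbs the input of only one arm's mechanism, and closure under post-processing (Proposition 2.1 of \cite{dpbook}) to pass from the released sums to the actions. Your write-up is simply a more careful expansion of the same argument — in particular, making explicit that conditioning on $a_{1:t-1}$ in Definition~\ref{def:dp} fixes the partition of rewards into per-arm streams, a point the paper leaves implicit.
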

\begin{proof}
This follows directly from the fact that the hybrid mechanism is $\epsilon$-DP after any number $t$ of  plays and a single flip of one reward in the sequence of rewards only affect one mechanism. Furthermore,  the whole algorithm is a random mapping from the output of the hybrid mechanism to the action taken and using Proposition 2.1 of \cite{dpbook} completes the proof.
\end{proof}

Theorem \ref{theo:bound} gives the regret for algorithm \ref{alg1}. 
While here we give only a sketch proof, the complete derivation can be found in the supplementary material \cite{dpsups}.
\begin{theorem}
  If Algorithm \ref{alg1} is run with $\nactions$ arms having arbitrary reward distributions,
  then, its expected regret $\mathcal{R}$ after any number $t$ of plays is bounded by:
  \begin{align}
      \mathcal{R} &\leq   \sum_{a:\mu_a < \mu_{*}}  
      \max\left(B  \left(\ln B  +7\right), \frac{8}{\lambda_0^2\Delta_a} \log{t}\right) \notag \\ &+\sum_{a:\mu_a < \mu_{*}} \left(\Delta_{a} + \frac{2\pi^2\Delta_{a}}{3}\right)
    \end{align}
    \[B = \frac{\sqrt{8}}{\epsilon(1-\lambda_0)}  \cdot \ln(4t^{4})\]
    for any  $\lambda_0$ such that $0 < \lambda_0 < 1$
   where $\mu_1$, \dots, $\mu_\nactions$ are the expected values of $P_1$, \dots, $P_\nactions$ and $\Delta_a = \mu_{*} - \mu_a$.
  \label{theo:bound}
\end{theorem}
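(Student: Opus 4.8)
The plan is to follow the classical UCB regret decomposition of~\cite{finitetimemab}, writing $\mathcal{R} = \sum_{a:\mu_a<\mu_*}\Delta_a\,\E[\nat]$, where $\nat$ is the number of pulls of arm $a$, and then to bound $\E[\nat]$ for each suboptimal arm. The only departure from the non-private analysis is that the index uses the \emph{private} sum $s_a(t)$ rather than the true sum of rewards, so the first step is to control the injected noise. Writing $s_a(t)$ as the true sum plus noise $\eta_a$, the tight hybrid-mechanism bound~\eqref{hybrid-mechanism-new-bound}, instantiated with failure probability $\gamma = t^{-4}$, gives $|\eta_a|\le\nu_a$ with probability at least $1-t^{-4}$, which is exactly the quantity added to the confidence term in Algorithm~\ref{alg1}. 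Thus, on this high-probability event, the private mean $s_a(t)/\nat$ differs from the true empirical mean by at most $\nu_a/\nat$, and the extra $\nu_a/\nat$ in the index restores a valid upper confidence bound.

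Next I would run the standard peeling argument. A suboptimal arm $a$ can be pulled at round $t+1$ only if its index exceeds that of the optimal arm, which---combining the noise bound above with Hoeffding's inequality applied to the true empirical means at width $\sqrt{2\log t/\nat}$ (each tail having mass at most $t^{-4}$)---forces at least one of three events: the optimal arm's mean is underestimated, arm $a$'s mean is overestimated, or the combined confidence width satisfies $2\sqrt{2\log t/\nat} + 2\nu_a/\nat\ge\Delta_a$. The first two events each contribute a summable tail $\sum_t t^{-4}$, and together with the initial $\nactions$ forced pulls these produce the additive $\Delta_a + \frac{2\pi^2\Delta_a}{3}$ constants per arm in the statement. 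The bulk of the work is to rule out the third event by showing it cannot occur once $\nat$ exceeds a threshold $\ell_a$.

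To bound $\ell_a$ I would split the gap using $\lambda_0$, requiring $2\sqrt{2\log t/\nat}\le\lambda_0\Delta_a$ and $2\nu_a/\nat\le(1-\lambda_0)\Delta_a$ separately, so that $\ell_a$ is the maximum of the two resulting thresholds. The first inequality is the ordinary UCB condition $\nat\ge 8\log t/(\lambda_0^2\Delta_a^2)$, contributing $\Delta_a\cdot 8\log t/(\lambda_0^2\Delta_a^2) = 8\log t/(\lambda_0^2\Delta_a)$ to the regret, the second argument of the maximum. The second inequality, after substituting $\nu_a = \frac{\sqrt8}{\epsilon}\ln(4t^4)(\ln\nat + 1)$, becomes the self-referential bound $\nat\ge\frac{2B}{\Delta_a}(\ln\nat + 1)$ with $B = \frac{\sqrt8}{\epsilon(1-\lambda_0)}\ln(4t^4)$; since the noise level itself grows logarithmically with the number of pulls, this is a transcendental inequality in $\nat$.

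I expect this last step to be the main obstacle. The resolution is a logarithmic-inequality lemma of the form ``$x\ge a(\ln x + c)$ whenever $x\ge a(\ln a + c')$'', applied with $a = 2B/\Delta_a$ and the constant chosen to absorb the residual $\ln(\cdot)$ and $\ln(1/\Delta_a)$ slack, yielding the clean threshold $\ell_a^{\mathrm{priv}} = B(\ln B + 7)/\Delta_a$ whose regret contribution $\Delta_a\,\ell_a^{\mathrm{priv}} = B(\ln B + 7)$ is the first argument of the maximum; care is needed here because the monotone quantity $\nat/(\ln\nat+1)$ must be shown to cross the level $2B/\Delta_a$ at $\ell_a^{\mathrm{priv}}$ and to stay above it thereafter. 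Setting $\ell_a = \max\!\left(\ell_a^{\mathrm{priv}},\, 8\log t/(\lambda_0^2\Delta_a^2)\right)$, multiplying the pull bound by $\Delta_a$, and summing over suboptimal arms together with the collected tail constants gives the stated bound, uniformly in $t$ since every occurrence of the horizon has been replaced by the current round $t$.
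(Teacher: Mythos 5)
Your proposal follows essentially the same route as the paper's own proof: the same regret decomposition $\mathcal{R}=\sum_a \Delta_a \E[T_a]$, the same three-event analysis (optimal arm underestimated, suboptimal arm overestimated, gap condition) with failure probability $\gamma=t^{-4}$ controlled by the hybrid-mechanism bound plus Chernoff--Hoeffding, the same $\lambda_0$ split of the gap into a UCB term giving $n\ge 8\log t/(\lambda_0^2\Delta_a^2)$ and a privacy term giving a transcendental inequality, and the same resulting threshold $B(\ln B+7)$. The only cosmetic difference is that the paper solves the transcendental inequality by naming its solution via the Lambert $W$ function on branch $-1$ (with the approximation of Barry et al., Section 3.1), whereas you invoke an equivalent elementary logarithmic-inequality lemma; these are the same mathematical step.
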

\begin{proof}[Proof Sketch]
	We used the bound on the hybrid mechanism defined in equation \ref{hybrid-mechanism-new-bound} together with the union and  Chernoff-Hoeffding bounds. We then select  the error probability $\gamma$ at each step to be $t^{-4}$. This leads to a transcendental inequality solved using the Lambert W function and approximated using section $3.1$ of \cite{Barry200095}.
\end{proof}

\subsection{The \DPUCB{} Algorithm}
\label{sec:priv-hybr-algor}
The key observation used in Algorithm \ref{alg2} is that if at each time step we insert a reward to all hybrid mechanisms, then the scale of the noise will be the same. This means that there is no need anymore to compensate an additional bound. More precisely, every time we play an arm $a_t = a$ and receive the reward $r_t$,  we not only add it to the hybrid mechanism corresponding to arm $a$ but we also add a reward of $0$ to the hybrid mechanism of all other arms. As these calculate a sum, it doesn't affect subsequent calculations.


Theorem \ref{theo:bound2} shows the validity of this approach by demonstrating a regret bound with only an additional factor of $O(\epsilon^{-2} \log^2 \log t)$ to the optimal non private regret. 

\begin{algorithm}                      
  \caption{\DPUCB{}}
  \label{alg2}                           
  \begin{algorithmic}
    \State Run Algorithm \ref{alg1} and set $\nu_a$ to $0$
    \State When arm $a_t = a$ is played,
    insert $0$ to all hybrid mechanisms corresponding to
    arm $a' \ne a$ (Do not increase $n_{a',t}$)
  \end{algorithmic}
\end{algorithm}

\begin{theorem}
  If Algorithm \ref{alg2} is run with $\nactions$ arms having arbitrary reward distributions,
  then, its expected regret $\mathcal{R}$ after any number $t$ of plays is bounded by:
  \begin{align*}
    \mathcal{R} &\leq   \sum_{a:\mu_a < \mu_{*}}  \Delta_{a} \left[
   \max\left(C^2  \left(\ln C  +7\right)^2, \frac{8}{\Delta_a^2} \log{t}\right) \right]\notag \\
   &+\sum_{a:\mu_a < \mu_{*}} \left(\Delta_{a} + 4\Delta_{a} \zeta(1.5)\right) 
  \end{align*}
\[ C = \frac{56(2+\sqrt{3.5}) \sqrt{\log t} }{\epsilon} \]
  where $\zeta$ denotes the Riemann zeta function. 
  
  \label{theo:bound2}
\end{theorem}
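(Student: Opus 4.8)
The plan is to mirror the analysis behind Theorem~\ref{theo:bound}, exploiting the one structural change that \DPUCB{} introduces: because a value (a real reward, or a $0$) is inserted into \emph{every} hybrid mechanism at each round, after $t$ rounds each mechanism has processed exactly $t$ inputs, so the Laplace noise injected into every arm's running sum is drawn at the \emph{same} scale $\tfrac{\log t}{\epsilon}$. Consequently the perturbed mean of arm $a$, namely $\tfrac{s_a(t)}{\nat}$, equals the true empirical mean plus a privacy error $z_a$ whose magnitude is, with probability at least $1-\gamma$, at most $\tfrac{1}{\nat}$ times the quantity in~\eqref{hybrid-mechanism-new-bound} evaluated at $n=t$. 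Since that bound is identical across arms, the index can be the plain UCB index $\tfrac{s_a(t)}{\nat}+\sqrt{\tfrac{2\log t}{\nat}}$ with no additive correction (i.e.\ $\nu_a=0$); this is the formal content of the remark that ``there is no need anymore to compensate an additional bound.'' Privacy itself follows exactly as in Theorem~\ref{theo:privacy_algo1}, since inserting the artificial zeros does not touch any private reward.

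First I would set up the usual optimism decomposition. Writing $c_{a,t}=\sqrt{2\log t/\nat}$ for the UCB radius and $\beta_{a,t}$ for the high-probability magnitude of $z_a$ (equal to $1/\nat$ times the bound in~\eqref{hybrid-mechanism-new-bound}), a suboptimal arm $a$ is pulled at round $t$ only if $\tfrac{s_a(t)}{\nat}+c_{a,t}\ge \tfrac{s_*(t)}{n_{*,t}}+c_{*,t}$. Conditioning on the Chernoff--Hoeffding events that both true empirical means lie within $c_{\cdot,t}$ of their expectations, and on the privacy events $|z_a|\le\beta_{a,t}$ and $|z_*|\le\beta_{*,t}$, this inequality forces $\Delta_a\le 2c_{a,t}+\beta_{a,t}+\beta_{*,t}$. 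The remainder of the argument bounds how large $\nat$ can become while this remains possible.

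This is where the two branches of the $\max$ arise. The term $\tfrac{8}{\Delta_a^2}\log t$ is precisely the statistical threshold beyond which $2c_{a,t}<\Delta_a$, exactly as in non-private UCB. The term $C^2(\ln C+7)^2$ is the threshold beyond which the privacy error $\beta_{a,t}$ drops below the statistical radius $c_{a,t}$; note that $C$ carries no $\Delta_a$ dependence, consistent with this being a comparison of the two error \emph{scales} rather than with the gap. Choosing the per-stage failure probability $\gamma$ to decay polynomially in $\nat$ makes $\log(4/\gamma)$ grow like $\log\nat$, so $\beta_{a,t}$ scales as $\tfrac{\log\nat\,\log t}{\epsilon\,\nat}$; requiring $\beta_{a,t}\le c_{a,t}$ then yields a transcendental inequality of the form $\nat\gtrsim \tfrac{\log t}{\epsilon^2}\log^2\nat$, i.e.\ $\nat\ge C^2\ln^2\nat$, whose solution via the Lambert-$W$ approximation of \cite{Barry200095} is the closed form $C^2(\ln C+7)^2$. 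I would then bound the expected number of suboptimal pulls by the $\max$ of the two thresholds plus the expected number of rounds in which one of the four good events fails; summing those failure probabilities over play counts produces the additive $\Delta_a+4\Delta_a\zeta(1.5)$, with $\zeta(1.5)$ reflecting the polynomial rate chosen for $\gamma$ and the factor $4$ the two arms times the two (statistical and privacy) events.

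The main obstacle is controlling the optimal arm's privacy error $\beta_{*,t}$, which scales with $1/n_{*,t}$ rather than $1/\nat$ and is therefore not directly regulated by the threshold imposed on $\nat$; the ``same-scale'' property is exactly what lets me treat $\beta_{a,t}$ and $\beta_{*,t}$ on equal footing, but care is needed to push the rounds in which $n_{*,t}$ is still small into the additive constant rather than into the leading term. The secondary technical point is executing the Lambert-$W$ solve precisely enough to recover the explicit constant $56(2+\sqrt{3.5})$ appearing in $C$.
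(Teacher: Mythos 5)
Your skeleton is the one the paper itself uses: its proof of Theorem~\ref{theo:bound2} is a one-line reduction to the arguments of Theorems~\ref{theo:bound} and~\ref{theo:regret_interval} (Auer-style three-event decomposition, privacy error forced below $\lambda_4 c_{t,n}$ with $\lambda_4 = 1-\sqrt{3.5}/2$, error probability $t^{-3.5}$, a Lambert-$W$ solve giving the threshold $C^2(\ln C+7)^2$, and the triple sum $\sum_t\sum_n\sum_{n_a}4t^{-3.5}=\sum_t 4t^{-1.5}=4\zeta(1.5)$). The genuine gap is in your pairing of noise model and failure probability. Because you evaluate \eqref{hybrid-mechanism-new-bound} at $n=t$ (which is indeed what Algorithm~\ref{alg2} does, since every mechanism receives an insertion each round), the only way to produce the product $\log\nat\cdot\log t$ that yields your transcendental inequality $\nat\gtrsim(\log t/\epsilon^2)\log^2\nat$ is, as you propose, to let $\gamma$ decay polynomially in $\nat$. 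But then the privacy-failure probability does not decay in $t$, and the bookkeeping behind the additive constant collapses: in the union bound the optimal arm's count $n$ ranges freely over $\{1,\dots,t-1\}$, so $\sum_t\sum_{n}\sum_{n_a}n^{-b}$ diverges, and $\nat$ itself can sit at a small value for arbitrarily many rounds, so even a per-round union bound $\sum_t\gamma_t$ is not summable. The paper takes the opposite pairing: $\gamma=t^{-3.5}$, whence $\ln(4/\gamma)\le 7\log t$, which together with $\sqrt{8}/\sqrt{2}=2$ and $1/\lambda_4=4(2+\sqrt{3.5})$ is exactly where $C=56(2+\sqrt{3.5})\sqrt{\log t}/\epsilon$ comes from, while the $\log\nat$ factor is obtained by evaluating the hybrid bound at the arm's play count $\nat$ rather than at $t$. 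With your (more faithful) noise model and $\gamma=t^{-3.5}$ you would instead get a threshold of order $\log^3 t/\epsilon^2$, not $C^2(\ln C+7)^2$; so as written your plan can recover the leading term or the $4\zeta(1.5)$ term, but not both simultaneously.

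The obstacle you flag about the optimal arm is real, and naming it is to your credit, but be aware the paper does not overcome it either: in its bound for the event $\pMean^*_n\le\Mean^*-c_{t,n}$ it silently writes $h_{n_a}$ (the suboptimal arm's count) where the optimal arm's error $h_n$ should appear, and the count threshold is imposed on $n_a$ only. Neither your sketch (``push those rounds into the additive constant'') nor the paper supplies an argument for rounds in which $n$ is below the threshold; since the noise tail decays too slowly relative to the confidence radius there, those rounds cannot be absorbed into a constant by a naive union bound. A rigorous fix would require either an explicit noise bonus in the index (as in \DPUCBb{}) or a genuinely different treatment of the optimal arm's under-estimation event; absent that, this step remains an unfilled hole in both your proposal and the published proof.
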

\begin{proof}[Proof Sketch]
 The proof is similar to the one for Theorem \ref{theo:bound}, but we have to choose the error probability to be $t^{-3.5}$.
\end{proof}

\subsection{The \DPUCBi{} Algorithm}
\label{sec:priv-interv-based}
Both Algorithms \ref{alg1} and \ref{alg2} enjoy a logarithmic regret with only a small additional factor in the time step $t$ to the optimal non-private regret. However, this includes a multiplicative factor of $\epsilon^{-1}$ and  $\epsilon^{-2}$ respectively. Consequently, increasing privacy scales the total regret proportionally.  A natural question is whether or not it is possible to get a differentially private algorithm with only an \emph{additive} constant to the optimal regret. Algorithm \ref{alt2} answers positively to this question by using novel tricks to achieve differential privacy. 
Looking at regret analysis of Algorithms \ref{alg1} and \ref{alg2}, we observe that by adding noise proportional to $\epsilon$, we will get a multiplicative factor to the optimal. In other words, to remove this factor, the noise should not depend on $\epsilon$.  But how can we get  $\epsilon$-DP in this case?

Note that if we compute and use the mean at each time step with an $\epsilon'_{\nat}$-DP algorithm, then after time step $t$, our overall privacy is roughly the sum $\cal{E'}$ of all $\epsilon'_{\nat}$. We then change the algorithm so that it only uses a released mean once every $\frac{1}{\epsilon}$ times, making privacy $\epsilon \cal{E'}$. In any case,  $\epsilon'_{\nat}$ needs to decrease, at least as $\nat^{-1}$, for the sum to be bounded by $\log \nat$. However, $\epsilon'_{\nat}$ should also be big enough such that the noise added keeps the UCB confidence interval used at the same order,  otherwise, the regret will be higher.

A natural choice for $\epsilon'_{\nat}$ is a p-series. Indeed, by making $\epsilon'_{\nat}$ to be of the form $\frac{1}{\nat^{v/2}}$, where $\nat$ is the number of times action $a$ has been played until time $t$, its sum will converge to the Riemann zeta function when $v$ is appropriately chosen. This choice of  $\epsilon'_{\nat}$ leads to the addition of a Laplace noise of scale $\frac{1}{\nat^{1-v/2}}$ to the mean (See Lemma \ref{lemma:interval_dp_step}). Now our trade-off issue between high privacy and low regret is just reduced into choosing a correct value for $v$. Indeed, we can pick $v > 2$, for the privacy to converge; but the noise added at each time step will be increasing and greater than the UCB bound; which is not desirable.
To overcome this issue, we used the more sophisticated $k$-fold adaptive composition theorem (III-3 in \cite{dworkrv10}). Roughly speaking, this theorem shows that  our overall privacy after releasing the mean a number of times depends on the sum of the \emph{square} of each individual privacy parameter $\epsilon'_{\nat}$. So, $v > 1 $ is enough for convergence and with $v \leq 1.5$, the noise added will be decreasing and will eventually become lower than the UCB bound.

In summary, we just need to \emph{lazily update the mean} of each arm every $\frac{1}{\epsilon}$ times. However, we show that the interval of release is much better than $\frac{1}{\epsilon}$ and follows a series $f$ as defined by Lemma (B.1) in the supplements \cite{dpsups}. Algorithm $\ref{alt2}$ summarizes the idea developed in this section.

The next lemma establishes the privacy $\epsilon'$ each time a new mean is released for a given arm $a$.
\begin{algorithm}[htb]                      
  \caption{\DPUCBi{} ($\epsilon$, $v$, $\nactions$, $\mathcal{A}$)}
  \label{alt2}                           
  \begin{algorithmic}
    \State \textbf{Input} $\epsilon \in (0, 1]$ $v \in (1, 1.5]$; privacy rate.
	\State $\nactions$ is the number of arms and $\mathcal{A}$ the set of all arms.
	\State	$f \gets \ceil{\frac{1}{\epsilon}}$;  $\hat{x} \gets 0$
	
	\State (For simplicity, we take the interval $f$ to be $\ceil{\frac{1}{\epsilon}}$ here)
    
    \For{$t \gets 1$ to $T$}
    \If {$t \leq \nactions f$}
    \State play  arm $a = (t-1)\mod{\nactions} + 1$ and observe $r_t$

    \Else

    \ForAll{$a \in \mathcal{A}$}

   	\If {$\nat \mod{f} = 0$}
   		\State $\hat{x}_a \gets \frac{s_a}{\nat} + \Laplace(0, \frac{1}{\nat^{1-v/2}}) + \sqrt{\frac{2\log{t}}{\nat}}$
   	\EndIf


    \EndFor
    
    \State Pull arm $a_t = \argmax_a \hat{x}_a$ and observe 
    $r_t$
    \State Update sum $s_a \gets s_a + r_t$.
    \EndIf

    \EndFor

  \end{algorithmic}
\end{algorithm}
\begin{lemma}
The mean $\hat{x}_a$ computed by Algorithm \ref{alt2} for a given arm $a$ at each interval is $\nat^{-v/2}$-differential private with respect to the reward sequence observed by that arm.
\label{lemma:interval_dp_step}
\end{lemma}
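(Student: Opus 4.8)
The plan is to recognize the mean update as a textbook instance of the Laplace mechanism and verify that the chosen noise scale matches the claimed privacy level. First I would isolate which parts of the update $\hat{x}_a = \frac{s_a}{\nat} + \Laplace(0, \frac{1}{\nat^{1-v/2}}) + \sqrt{\frac{2\log{t}}{\nat}}$ actually depend on the private reward sequence. Of the three summands, only the empirical mean $\frac{s_a}{\nat}$ is a function of the rewards; the quantities $\nat$ and $t$, and hence the confidence term $\sqrt{\frac{2\log{t}}{\nat}}$, are determined by the (public) action sequence alone. Adding a deterministic quantity that does not depend on the rewards is post-processing, which cannot weaken a differential-privacy guarantee (Proposition~2.1 of \cite{dpbook}); it therefore suffices to bound the privacy of the noisy empirical mean $\frac{s_a}{\nat} + \Laplace(0, \frac{1}{\nat^{1-v/2}})$.

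The second step is the sensitivity computation. Fixing $\nat$, I would compare two reward sequences for arm $a$ differing in a single entry. Since every reward lies in $[0,1]$, flipping one reward changes the sum $s_a$ by at most $1$, so the empirical mean $\frac{s_a}{\nat}$ changes by at most $\frac{1}{\nat}$. Hence the $L_1$-sensitivity of the statistic being perturbed is exactly $\frac{1}{\nat}$.

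Finally I would invoke the standard guarantee of the Laplace mechanism: releasing a statistic of sensitivity $\Delta$ with additive noise $\Laplace(0, b)$ is $\frac{\Delta}{b}$-differentially private. Substituting $\Delta = \frac{1}{\nat}$ and $b = \frac{1}{\nat^{1-v/2}}$ gives privacy parameter $\frac{\Delta}{b} = \frac{1/\nat}{1/\nat^{1-v/2}} = \nat^{-v/2}$, which is precisely the claim. There is no genuine obstacle in this argument; the only points requiring care are the sensitivity bookkeeping (it is the \emph{mean} $1/\nat$ rather than the raw sum that is released, so the sensitivity carries the $1/\nat$ factor and cancels against the $\nat^{1-v/2}$ in the noise scale) and the observation that the data-independent UCB term is harmless by post-processing.
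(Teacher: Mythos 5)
Your proof is correct and follows essentially the same route as the paper's: both compute the $L_1$-sensitivity of the empirical mean as $\frac{1}{\nat}$ (since rewards lie in $[0,1]$) and invoke the standard Laplace-mechanism guarantee to obtain the privacy parameter $\frac{1/\nat}{1/\nat^{1-v/2}} = \nat^{-v/2}$. Your additional remark that the confidence term $\sqrt{\frac{2\log t}{\nat}}$ is data-independent and thus harmless by post-processing merely makes explicit a step the paper leaves implicit.
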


\begin{proof}{Sketch}
	This follows directly from the fact that we add Laplace noise of scale $\nat^{v/2-1}$.
\end{proof}
The next theorem establishes the overall privacy after having played for $t$ time steps.
\begin{theorem}
After playing for any $t$ time steps, Algorithm \ref{alt2} is $(\epsilon', \delta')$-differential private with
\[ \epsilon' \leq   \min\left( \sum_{n=1}^{t} \frac{\epsilon}{\sqrt{n^{v}}} ,\epsilon \sum_{n=1}^{t} \frac{e^{\frac{1}{\sqrt{n^{v}}}}-1}{\sqrt{t^{v}}}  + \sqrt{\epsilon\sum_{n=1}^{t} \frac{2\ln\frac{1}{\delta'}}{n^{v}} } \right) \]
for any $\delta' \in (0, 1]$, $\epsilon \in (0, 1]$
\label{theo:interval_simple_privacy}
\end{theorem}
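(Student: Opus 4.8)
The plan is to reduce the privacy of the whole algorithm to the privacy of the sequence of \emph{noisy mean releases}, and then to bound that sequence in two different ways, taking the better of the two. First I would invoke post-processing (Proposition 2.1 of \cite{dpbook}): since every action $a_t$ of Algorithm \ref{alt2} is a deterministic function of the released vector $\hat{x}$, it suffices to bound the privacy of the quantities $\hat{x}_a$ that the algorithm actually emits. Next, observe that two reward sequences differing in a single time step differ in exactly one arm's reward, so the changed reward enters the sum $s_a$ of only that arm and hence perturbs only that arm's releases; the other arms use independent noise and are unaffected. Thus it is enough to bound the composed privacy of the releases of a single arm, each of which, by Lemma \ref{lemma:interval_dp_step}, is $\nat^{-v/2}$-DP at the count $\nat$ at which it is emitted. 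Because later releases depend on earlier outputs through the realized action sequence, I would treat these releases as an \emph{adaptive} composition, which is the regime covered by the theorems below.

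The first term of the minimum follows from basic (linear) composition: the total privacy is at most $\sum_j (n_j)^{-v/2}$, where $n_1 < n_2 < \cdots$ are the counts at which the arm releases, i.e. the multiples of $f = \lceil 1/\epsilon\rceil$. To convert this release-indexed sum into the stated time-indexed sum I would use a blocking argument: group the integers $1,\dots,t$ into consecutive blocks of length $f$, and use that $n \mapsto n^{-v/2}$ is decreasing, so each term $(jf)^{-v/2}$ is dominated by the average of $n^{-v/2}$ over the $j$-th block. Summing gives $\sum_j (n_j)^{-v/2} \le \frac{1}{f}\sum_{n=1}^{t} n^{-v/2} \le \epsilon \sum_{n=1}^{t} n^{-v/2}$, since $f \ge 1/\epsilon$.

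For the second term I would apply the $k$-fold adaptive composition theorem (III-3 of \cite{dworkrv10}), which states that composing mechanisms with parameters $\epsilon_j$ is $(\epsilon', \delta')$-DP with $\epsilon' \le \sum_j \epsilon_j\bigl(e^{\epsilon_j}-1\bigr) + \sqrt{2\ln(1/\delta')\sum_j \epsilon_j^2}$ for any $\delta' \in (0,1]$. Substituting $\epsilon_j = (n_j)^{-v/2}$ and applying the same blocking/monotonicity estimate to each summand — to $\sum_j \epsilon_j^2 = \sum_j (n_j)^{-v}$ for the variance term, and to $\sum_j \epsilon_j(e^{\epsilon_j}-1)$ for the bias term — pulls out the factor $\frac{1}{f}\le \epsilon$ and replaces the release-sums by the full sums $\sum_{n=1}^t n^{-v}$ and $\sum_{n=1}^t n^{-v/2}(e^{n^{-v/2}}-1)$, giving exactly the two pieces of the second bound. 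Taking the smaller of the two completes the proof.

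The step I expect to be most delicate is the reduction to an adaptive composition, because the means control the actions and the actions control which arm each future reward is assigned to, and hence the release schedule $\nat \bmod f = 0$; so the set of releases that a single changed reward can influence is itself data-dependent. Making this rigorous requires arguing that, along any pair of neighbouring executions, the changed reward contributes to the sensitivity of at most the releases of its own arm, each still $\nat^{-v/2}$-DP by Lemma \ref{lemma:interval_dp_step}, so that the accounting of \cite{dworkrv10} applies uniformly over the realized action path. Once that is fixed, the two composition inequalities and the blocking estimates that extract the $\epsilon$ factor and convert release-indexed sums into the stated time-indexed sums are routine.
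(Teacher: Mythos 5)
Your proposal is correct and takes essentially the same route as the paper's own proof: post-processing plus the observation that a single changed reward affects only one arm, Lemma \ref{lemma:interval_dp_step} for the per-release privacy, the $k$-fold adaptive composition theorem (III-3 of \cite{dworkrv10}) yielding both terms of the minimum, and a monotonicity/blocking estimate converting release-indexed sums over multiples of $f$ into $\epsilon$ times the full time-indexed sums. The only difference is presentational: you spell out the blocking argument directly, whereas the paper packages the same inequality into the defining property of the release series $f$ (Lemma \ref{interval-frequency}).
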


\begin{proof}[Proof Sketch]
We begin by using similar observations as in Theorem \ref{theo:privacy_algo1}. Then, we compute the privacy of the mean of an arm using the $k$-fold adaptive composition theorem in \cite{dworkrv10} (see the supplements \cite{dpsups}).
	
\end{proof}
The next corollary gives a nicer closed form for the privacy parameter which is needed in practice.
\begin{corollary}
After playing for $t$ time steps, Algorithm \ref{alt2} is $(\epsilon', \delta')$-differential private with
\[\epsilon' \leq  \min \left(  \epsilon \frac{t^{1-v/2} -v/2}{1-v/2}, 2 \epsilon\zeta(v) + \sqrt{ 2 \epsilon\zeta(v) \ln(1/\delta')} \right)\]
with $\zeta$ the Riemann Zeta Function for any $\delta' \in (0, 1]$, $\epsilon \in (0, 1]$, $v \in (1, 1.5]$.
\label{cor:alt2:dp}
\end{corollary}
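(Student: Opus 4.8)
The plan is to derive each of the two arguments of the minimum in Corollary~\ref{cor:alt2:dp} from the corresponding argument in Theorem~\ref{theo:interval_simple_privacy} by replacing the finite partial sums with clean closed-form upper bounds: an integral estimate for the basic-composition term, and convergence to the Riemann zeta function for the advanced-composition term. Since $v \in (1, 1.5]$, all the relevant series are either monotone (so the integral test applies) or convergent (so they are bounded by their limiting zeta values), which is exactly what makes the closed forms possible.

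For the first (basic-composition) term, note that $\sum_{n=1}^t \epsilon/\sqrt{n^v} = \epsilon \sum_{n=1}^t n^{-v/2}$, and that $x \mapsto x^{-v/2}$ is positive and decreasing. I would therefore bound the sum by its first term plus the integral of the tail,
\[ \sum_{n=1}^t n^{-v/2} \le 1 + \int_1^t x^{-v/2}\dd x = 1 + \frac{t^{1-v/2}-1}{1-v/2} = \frac{t^{1-v/2}-v/2}{1-v/2}, \]
where the last equality is algebra and uses $1 - v/2 > 0$ (valid since $v \le 1.5 < 2$). Multiplying by $\epsilon$ gives the first argument of the corollary's minimum exactly.

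For the second (advanced-composition) term I would treat the two summands separately. The square-root summand is immediate: since $v > 1$, the partial sums $\sum_{n=1}^t n^{-v}$ are bounded by the convergent series $\sum_{n=1}^\infty n^{-v} = \zeta(v)$, so $\sqrt{\epsilon \sum_{n=1}^t 2\ln(1/\delta')\, n^{-v}} \le \sqrt{2\epsilon\,\zeta(v)\ln(1/\delta')}$. For the composition summand, the key elementary estimate is $e^x - 1 \le 2x$ for all $x \in (0,1]$: the function $(e^x-1)/x$ is increasing on $(0,1]$ and attains its maximum $e - 1 < 2$ at $x = 1$. Since $n^{-v/2}\in(0,1]$ for every $n\ge 1$, applying this with $x = n^{-v/2}$ yields $e^{n^{-v/2}} - 1 \le 2 n^{-v/2}$, hence $\sum_n (e^{n^{-v/2}}-1)\, n^{-v/2} \le 2\sum_n n^{-v} \le 2\zeta(v)$, so the composition summand is at most $2\epsilon\,\zeta(v)$. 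Adding the two pieces reproduces the corollary's second argument.

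The only genuinely delicate points are bookkeeping rather than conceptual. First, I must pin down the per-release privacy parameter feeding the advanced-composition bound so that the summand indeed reads $(e^{n^{-v/2}}-1)/\sqrt{n^v}$; this is where Lemma~\ref{lemma:interval_dp_step} (each release is $n^{-v/2}$-DP) enters, and care is needed that the $\sqrt{\cdot}$ in the denominator is indexed by $n$ rather than $t$ so that the estimate collapses to $\zeta(v)$. Second, I should confirm the exponential inequality holds over the full range of exponents encountered — it does, since every exponent $n^{-v/2}$ lies in $(0,1]$ — and that $v > 1$ is precisely what guarantees $\zeta(v) < \infty$. Everything else is the integral test and the monotonicity of $(e^x-1)/x$, both standard.
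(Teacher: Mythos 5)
Your proposal is correct and follows essentially the same route as the paper's own proof: the integral test yields the bound $\epsilon\frac{t^{1-v/2}-v/2}{1-v/2}$ on the basic-composition term, and the second term is handled by bounding the partial sums by infinite series together with the inequality $e^{x}-1\leq 2x$ on $(0,1]$, which collapses both summands to expressions in $\zeta(v)$. Your additional care in justifying the exponential inequality and in noting that the denominator must be indexed by $n$ (not $t$) for the zeta bound to apply only makes explicit what the paper leaves implicit.
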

\begin{proof}[Proof Sketch]
	We upper bounded the first term in theorem \ref{theo:interval_simple_privacy} by the integral test, then for the second term we used $e^{x} \leq 1 + 2x$ for all $x \in [0, 1]$ to conclude the proof.
\end{proof}
The following corollary gives the parameter $\epsilon$ with which one should run Algorithm \ref{alt2} to achieve a given $\epsilon'$ privacy.
\begin{corollary}
If you run Algorithm \ref{alt2} with parameter
$\epsilon = \left(\sqrt{\frac{\log\frac{1}{\delta'} + 4\epsilon'}{8\zeta(v)}} - \sqrt{\frac{\log\frac{1}{\delta'}}{8 \zeta(v)}} \right)^2$ for any $\delta' \in (0, 1]$, $\epsilon' \in (0, 1]$, $v \in (1, 1.5]$, 
you will be at least $(\epsilon', \delta')$-differential private.
\label{alt2_cor}
\end{corollary}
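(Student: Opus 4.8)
The statement is exactly the inversion of Corollary~\ref{cor:alt2:dp}. That corollary tells us that if Algorithm~\ref{alt2} is run with internal parameter $\epsilon$, then after $t$ steps it is differentially private with privacy level bounded by $2\epsilon\zeta(v) + \sqrt{2\epsilon\zeta(v)\ln(1/\delta')}$ (the second term of the $\min$). To guarantee that the algorithm is \emph{at least} $(\epsilon',\delta')$-private for a prescribed target $\epsilon'$, it suffices to pick the internal parameter $\epsilon$ so that this upper bound does not exceed $\epsilon'$. The plan is therefore to set that upper bound equal to the target $\epsilon'$ and solve for $\epsilon$; any smaller $\epsilon$ would only strengthen privacy, so equality is the tight and natural choice.

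\textbf{Key steps.} First I would introduce the substitution $u \defn \sqrt{2\epsilon\zeta(v)}$, which turns the privacy bound into $u^2 + u\sqrt{\ln(1/\delta')}$. Setting this equal to $\epsilon'$ yields the quadratic
\[
u^2 + u\sqrt{\ln(1/\delta')} - \epsilon' = 0 .
\]
Applying the quadratic formula and keeping the positive root (since $u \geq 0$ by construction), I obtain
\[
u = \frac{\sqrt{\ln(1/\delta') + 4\epsilon'} - \sqrt{\ln(1/\delta')}}{2}.
\]
Squaring $u = \sqrt{2\epsilon\zeta(v)}$ and solving for $\epsilon$ then gives
\[
\epsilon = \frac{1}{8\zeta(v)}\left(\sqrt{\ln(1/\delta') + 4\epsilon'} - \sqrt{\ln(1/\delta')}\right)^2,
\]
which, after pulling the factor $1/(8\zeta(v))$ inside each square root, is precisely the closed form stated in the corollary. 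So the algebra reduces to a one-line quadratic solve plus a cosmetic rearrangement.

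\textbf{The one thing to check.} There is no genuine analytic obstacle here; the only point requiring care is that Corollary~\ref{cor:alt2:dp} is only valid for an internal parameter $\epsilon \in (0,1]$ (its own proof uses $e^x \leq 1+2x$ on $[0,1]$), so I must verify that the formula always returns a value in this range. Writing the squared difference as $16\epsilon'^2 / (\sqrt{\ln(1/\delta')+4\epsilon'} + \sqrt{\ln(1/\delta')})^2$ shows $\epsilon$ is maximized when $\delta' = 1$, where it reduces to $\epsilon = \epsilon'/(2\zeta(v))$; since $\zeta$ is decreasing on $(1,\infty)$ we have $\zeta(v) \geq \zeta(1.5) > 2$ for $v \in (1,1.5]$, and with $\epsilon' \leq 1$ this gives $\epsilon < 1$. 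Hence the chosen $\epsilon$ lies in the admissible range for every $(\delta', \epsilon', v)$ in the stated domain, and Corollary~\ref{cor:alt2:dp} legitimately applies, completing the argument.
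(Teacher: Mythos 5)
Your proposal is correct and takes essentially the same route as the paper: the paper's own proof consists of the single remark that the formula is obtained by inverting the Riemann-zeta term of Corollary~\ref{cor:alt2:dp}, which is exactly your quadratic solve in $u=\sqrt{2\epsilon\zeta(v)}$ followed by taking the positive root. Your extra verification that the resulting $\epsilon$ lies in $(0,1]$ (so that Corollary~\ref{cor:alt2:dp} legitimately applies) is a detail the paper leaves implicit, and your argument for it is sound.
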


\begin{proof}
	The proof is obtained by inverting the term using the Riemann zeta function in corollary  \ref{cor:alt2:dp}.
\end{proof}

Finally, we present the regret of Algorithm \ref{alt2} in theorem \ref{theo:regret_interval} . A simple observation shows us that it has the same regret as the non private UCB with just an additive constant.
\begin{theorem}
  If Algorithm \ref{alt2} is run with $\nactions$ arms having arbitrary reward distributions,
  then, its expected regret $\mathcal{R}$ after any number $t$ of plays is bounded by:
  \begin{equation*}
    \mathcal{R} \leq   \sum_{a:\mu_a < \mu_{*}}  \Delta_{a} \left[
    f_0 + \frac{8}{\Delta_a^2} \log t + 1+ 4 \zeta(1.5) \right]
  \end{equation*}
  where $f_0 \leq \frac{1}{\epsilon}$. More precisely, $f_0$ is the first value of the series $f$ defined in Lemma B.1 in the supplements \cite{dpsups}.

  \label{theo:regret_interval}
\end{theorem}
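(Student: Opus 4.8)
The plan is to adapt the classical UCB regret analysis of \cite{finitetimemab} to the two features that distinguish Algorithm \ref{alt2} from ordinary UCB: the Laplace perturbation added to each index, and the fact that indices are refreshed only once per interval. I would begin from the standard per-arm decomposition $\mathcal{R} = \sum_{a:\mu_a < \mu_*}\Delta_a\,\E[T_a(t)]$, where $T_a(t)$ counts the pulls of a suboptimal arm $a$, so that it suffices to bound each $\E[T_a(t)]$ by the bracketed quantity. The round-robin phase $t \le \nactions f$ contributes $f_0$ pulls of each arm, which I peel off as the leading $f_0$ term; this same term also absorbs the staleness of the lazy update, since between two refreshes arm $a$ can be selected at most $f_0$ times on a frozen index.

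For the main term I would run the Auer counting argument over the refreshes. Writing the index of arm $a$ as $\bar X_a + L_a + c_a$ with $c_a = \sqrt{2\log t/\nat}$ and $L_a \sim \Laplace(0,\nat^{\,v/2-1})$, a suboptimal arm is selected only if $\bar X_a + L_a + c_a \ge \bar X_* + L_* + c_*$, and the usual contradiction argument places this event inside the union of (i) $\bar X_* \le \mu_* - c_*$, (ii) $\bar X_a \ge \mu_a + c_a$, (iii) $L_* \le -g$ or $L_a \ge g$ for a noise margin $g$, and (iv) the deterministic event $\Delta_a < 2c_a + 2g$. Events (i) and (ii) are controlled by Chernoff--Hoeffding exactly as in Theorem \ref{theo:bound2}: choosing the per-step error level $t^{-3.5}$ and summing over the time steps and play counts produces the additive $1 + 4\zeta(1.5)$ contribution. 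Event (iv) is ruled out once $\nat$ exceeds $\frac{8}{\Delta_a^2}\log t$, provided $g$ is of lower order, which yields the $\frac{8}{\Delta_a^2}\log t$ term (with the ceiling giving the $+1$).

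The step I expect to be the main obstacle is controlling the Laplace events in (iii), because the noise scale $\nat^{\,v/2-1}$ shrinks only like $\nat^{-(1-v/2)}$, which for $v > 1$ is \emph{slower} than the confidence radius $c_a \sim \nat^{-1/2}$; a naive split of $c_a$ between the Hoeffding and noise parts would destroy the $t^{-3.5}$ concentration. The resolution I would pursue is to fix the margin $g$ not relative to the shrinking radius but so that the Laplace tail $\tfrac12 e^{-g/\nat^{\,v/2-1}}$ is summable over the play counts, e.g.\ $g = \Theta(\nat^{\,v/2-1}\log \nat)$, which keeps $g$ of lower order than $\Delta_a$ (so that (iv) still fails near $\nat \approx \frac{8}{\Delta_a^2}\log t$) while making $\sum_{\nat}\Pr(L_a \ge g)$ converge. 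Here the restriction $v \le 1.5$ is exactly what guarantees that $\nat^{\,1-v/2}$ grows fast enough for the noise to be eventually dominated by the gap yet still summable. Collecting the four pieces gives $\E[T_a(t)] \le f_0 + \frac{8}{\Delta_a^2}\log t + 1 + 4\zeta(1.5)$, and multiplying by $\Delta_a$ and summing over the suboptimal arms yields the claimed bound.
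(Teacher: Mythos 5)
Your overall architecture is the same as the paper's: the decomposition $\mathcal{R}=\sum_{a:\mu_a<\mu_*}\Delta_a\,\E[T_a(t)]$, the $f_0$ term absorbing the initial/lazy phase, Chernoff--Hoeffding at error level $t^{-3.5}$, a Laplace tail bound, and a deterministic threshold killing the gap event. The one structural difference is that the paper keeps no separate noise event: it folds the Laplace deviation into events (i) and (ii), bounding $\Pr(\hat x_{a,n_a}\geq\mu_a+c_{t,n_a})\leq\gamma+\exp(-2n_a(c_{t,n_a}-h_{n_a})^2)$ with $h_n=\log(1/\gamma)\,n^{v/2-1}$ and $\gamma=t^{-3.5}$, subject to the constraint $h_{n_a}\leq(1-\sqrt{3.5}/2)\,c_{t,n_a}$, so that its third event is the pure UCB condition $\mu_*<\mu_a+2c_{t,n}$ and the leading term comes out as exactly $\frac{8}{\Delta_a^2}\log t$ with no splitting constant.

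The genuine gap is in your handling of the noise events, i.e.\ exactly at the step you flagged as the obstacle. Your margin $g=\Theta(\nat^{v/2-1}\log \nat)$ yields tails $\Pr(L_a\geq g)\asymp \nat^{-C}$ that do not decay with $t$, and there is no consistent place to put such events in the argument. If they go inside the Auer triple sum $\sum_t\sum_{n<t}\sum_{n_a<t}$ --- which is what ``summing over the time steps and play counts'' requires, and which is where the paper's $4\zeta(1.5)$ actually comes from --- then the noise contribution is of order $\sum_t t\,\zeta(C)$, which diverges; restoring convergence forces $g\gtrsim \log t\cdot \nat^{v/2-1}$, and then the gap event $\Delta_a<2c_{t,n_a}+2g$ can only be excluded for $n_a\gtrsim(\log t/\Delta_a)^{2/(2-v)}$, which for $v\in(1,1.5]$ lies between $(\log t)^2$ and $(\log t)^4$ and destroys the claimed $\frac{8}{\Delta_a^2}\log t+O(1)$ form. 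If instead you sum the tails per play count, $\sum_{\nat}\Pr(L\geq g_{\nat})<\infty$ --- your stated plan --- this is legitimate only for arm $a$'s own noise, since each count of arm $a$ is visited once (up to the factor $f_0$ from laziness), but it fails for the optimal arm's noise: a draw with $L_*\leq-g_n$ at the optimal arm's current count $n$ is not refreshed with $t$; under the lazy scheme it is replaced only at that arm's next release, which is tied to its own pull count and will not occur while its index is depressed. One bad draw of probability $n^{-C}$ can therefore coincide with arbitrarily many pulls of arm $a$ at distinct counts $n_a$, so $\E[T_a(t)]$ is not bounded by $\sum_n\Pr(L_{*,n}\leq-g_n)$; you would need an additional argument bounding the starvation time of the optimal arm after a bad draw, which your sketch does not supply. (Your instinct here is, if anything, sharper than the paper's own treatment: the paper's condition $h_{n_a}\leq(1-\sqrt{3.5}/2)c_{t,n_a}$ rearranges to $n_a^{(v-1)/2}\leq\frac{2-\sqrt{3.5}}{3.5\sqrt{2\log t}}$, an upper bound on $n_a$ that shrinks with $t$, so the tension you identified between the $t^{-3.5}$ concentration and the slowly decaying noise is real, and it is the weak point of the argument in the paper as well.)
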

\begin{proof}[Sketch]
  This is proven using a Laplace concentration inequality to bound the estimate of the mean then we selected the error probability to be $t^{-3.5}$.
\end{proof}

	\begin{figure*}[htb]
	\centering
	\subfloat[Regret for $\epsilon = 1$, 2 arms]{
		\scalebox{0.4}{\input{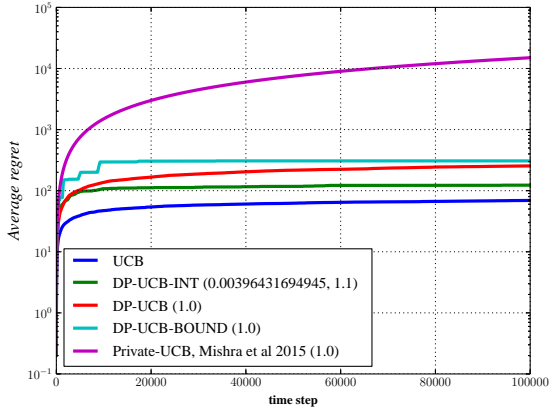}}
	}
	\subfloat[Regret for $\epsilon = 0.1$, 2 arms]{
		\scalebox{0.4}{\input{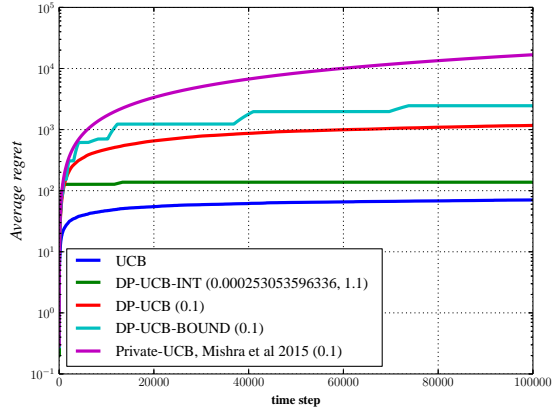}}
	}
	\\
	\subfloat[Regret for $\epsilon = 1$, 10 arms]{
	\scalebox{0.4}{\input{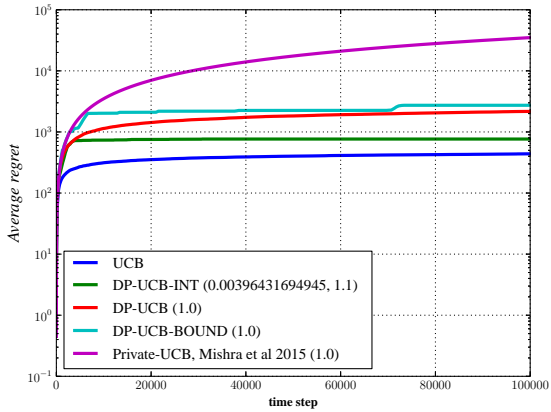}}
	}
	\subfloat[Regret for $\epsilon = 0.1$, 10 arms]{
	\scalebox{0.4}{\input{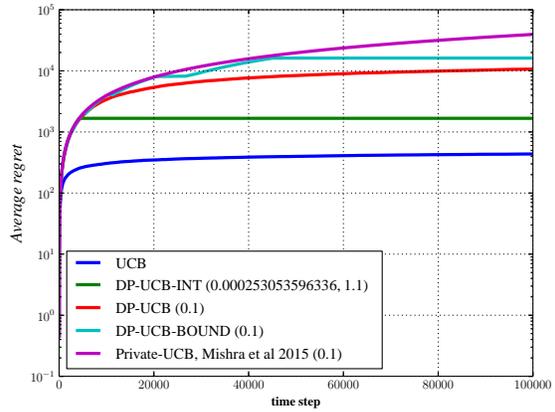}}
	}	
	\caption{Experimental results with 100 runs, 2 or 10 arms with rewards: $\{0.9, 0.6\}$ or $\{0.1 \ldots 0.2, 0.55, 0.1 \ldots \}$ .}
	\label{fig:experiments}
	\end{figure*}

\section{Experiments}
\label{sec:experiments}
We perform experiments using arms with rewards drawn from independent Bernoulli distribution.
The plot, in logarithmic scale, shows the regret of the algorithms over $100,000$ time steps averaged over 100 runs. 
We targeted 2 different $\epsilon$ privacy levels : 0.1 and 1. For \DPUCBi{}, we pick $\epsilon$ such that the overall privacy is $(\epsilon', \delta')$-DP with $\epsilon$ as defined in corollary \ref{alt2_cor} and $\delta' = e^{-10}$, $\epsilon' \in \{0.1, 1\}, $ $v = 1.1$. We put in parenthesis the \emph{input} privacy of each algorithm.

We compared against the non private UCB algorithm and the algorithm presented in \cite{mishra2015nearly} (\emph{Private-UCB} ) with a failure probability chosen to be $t^{-4}$.

We perform two scenarios. Firstly we used two arms: one with expectation 0.9 and the other 0.6. The second scenario is a more challenging one with 10 arms having all an expectation of 0.1 except two with 0.55 and 0.2.

As expected, the performance of \DPUCBi{} is significantly better than all other private algorithms. More importantly, the gap between the regret of  \DPUCBi{} and the non private UCB does not increase with time confirming the theoretical regret.
We can notice that \DPUCB{} is better than \DPUCBb{} for small time steps. However, as the time step increases \DPUCBb{} outperforms \DPUCB{}  and eventually catches its regret. The reason for that is: \DPUCB{} spends less time to distinguish between arms with close rewards due to the fact that  the additional factor  in its regret depends on $\Delta_a = \mu_* - \mu_a$ which is not the case for \DPUCB{}. Private-UCB performs worse than all other algorithms which is not surprising.

Moreover, we noticed that the difference between the best regret (after 100 runs) and worst regret is very consistent for all ours algorithms and the non private UCB (it is under 664.5 for the 2 arms scenario). However, this gap reaches $30, 000$ for Private-UCB. This means that our algorithms are able to correctly trade-off between exploration and exploitation which is not the case for Private-UCB.

\section{Conclusion and Future Work}
\label{sec:concl-future-work}
In this paper, we have proposed and analysed differentially private algorithms for the stochastic multi-armed bandit problem, significantly improving upon the state of the art.
The first two, (\DPUCB{} and \DPUCBb{}) are variants of an existing private UCB algorithm~\cite{mishra2015nearly}, while the third one uses an interval-based mechanism.

Those first two algorithms are only within a factor of $O(\epsilon^{-1} \log \log t)$ and $O(\epsilon^{-2} \log^2 \log t)$ to the non-private algorithm. The last algorithm, \DPUCBi{}, efficiently trades off the privacy level and the regret and is able to achieve the same regret as the non-private algorithm up to an additional \emph{additive} constant. This has been achieved by using two key tricks: updating the mean of each arm lazily with a frequency proportional to the privacy $\epsilon^{-1}$ and adding a noise independent of $\epsilon$. Intuitively, the algorithm achieves better privacy without increasing regret, because its output is less dependent on individual reward.

Perhaps it is possible to improve our bounds further if we are willing to settle for asymptotically low regret~\cite{cowan2015asymptotic}.
A natural future work is to study if we can use similar methods for other mechanisms such as Thompson sampling (known to be differentially private~\cite{dimitrakakis:spbi:alt}) instead of UCB. Another question is whether a similar analysis can be performed for \emph{adversarial} bandits.

We would also like to connect more to applications by two extensions of our algorithms. The first natural extension is to consider some side-information. In the drug testing example, this could include some information about the drug, the test performed and the user examined or treated. The second extension would relate to generalising the notion of neighbouring databases to take into account the fact that multiple observations in the sequence (say $m$) can be associated with a single individual. Our algorithms can be easily extended to deal with this setting (by re-scaling the privacy parameter to $\frac{\epsilon}{m}$). However, in practice, $m$ could be quite large and it will be an interesting future work to check if we could get sub linearity in the parameter $m$ under certain conditions.

\appendix
\bibliographystyle{aaai}
\bibliography{bibliography}
\section{Collected proofs}

\subsection{Proof of Theorem \ref{theo:bound}}

$a$ will be used to indicate the index of an arm. $\epsilon$ is the differential privacy parameter.  $t$ is used to denote the time step.

From Lemma \eqref{lem:chan-improved}, we know that the error between the empirical and private mean is bounded as $\abs{\eMean - \pMean} \leq \dpbound_n$ with probability at least $1-\gamma$, $\pMean$ is the empirical mean returned by the private mechanism,
$\eMean$ the true empirical mean, $\dpbound_n$ the error due to the differentially private mechanism. It is defined as: $\dpbound_n = \frac{1}{\epsilon} \cdot   \sqrt{8}(\log{n}) \cdot \ln{\frac{4}{\gamma}} \cdot \frac{1}{n} +  \frac{1}{\epsilon} \cdot   \sqrt{8} \cdot \ln{\frac{4}{\gamma}} \cdot \frac{1}{n}$.
We can rewrite this bound into equations \ref{eq:privateboundupper} and \ref{eq:privateboundlower}.
\begin{align}
  \Pr(\pMean \geq \eMean + h_{n}) &\leq \gamma
  \label{eq:privateboundupper}
  \\
  \Pr(\pMean \leq \eMean - h_{n}) &\leq \gamma.
  \label{eq:privateboundlower}
\end{align}

Let $T_a(s)$ be the number of times arm a is played in the first $s$ time steps.
Let's  $\cbound_{t,n} \defn \sqrt{(2\ln t)/n}$ denote the original UCB  confidence index.

By following similar steps as in the demonstration of UCB in \cite{finitetimemab}, we have
\begin{align}
 T_a(s) &= 1 + \sum_{t = K+1}^{s} \{ a_t  = a \}
 \notag \\
 &\leq \ell +  \sum_{t=1}^{\infty} \sum_{n = 1}^{t-1} \sum_{n_a = \ell}^{t-1}    \notag \\
 &\quad \left \{ \pMean^*_n + \cbound_{t,n} + \dpbound_{n} \leq  \pMean_{a,n_a} + c_{t,n_a} + h_{n_a} \right \}
  \label{eq:num_arms}
\end{align}

In equation \ref{eq:num_arms},  $\pMean^*_n$ is the mean returned by the private mechanism for the best arm when it has been played $n$ times.
Now we can observe that $\pMean^*_n + c_{t,n} + h_{n} \leq  \pMean_{a,n_a} + c_{t,n_a} + h_{n_a}$  implies that at least one of the following must hold
\begin{align}
  \pMean^*_n &\leq {\Mean}^* - c_{t,n} - h_{n}
  \label{eq:cond1}
  \\
  \pMean_{a,n_a} &\geq {\Mean}_a +  c_{t,n_a} + h_{n_a}
  \label{eq:cond2}
  \\
  {\Mean}^{*} &< {\Mean}_a + 2c_{t,n} + 2h_{n}
  \label{eq:cond3}
\end{align}

We can bound the probability of events (\ref{eq:cond1}) using equation  (\ref{eq:privateboundlower}), the union bound and the Chernoff-Hoeffding bound.
\begin{align}
  \Pr(\ref{eq:cond1})  &= \Pr(\pMean^*_n \leq \Mean^* - c_{t,n} - h_n) \notag \\
  &=\Pr(\pMean^*_n \leq \eMean^*_n - h_{n} \lor 
  \eMean^*_n \leq \Mean^* - c_{t,n})
 \notag \\
  &\leq
  \Pr(\pMean^*_n \leq \eMean^*_n - h_{n})
  + \Pr(\eMean^*_n \leq \Mean^* - c_{t,n})
  \notag \\
  &\leq
  \gamma + \exp(-4 \log t)  = \gamma + t^{-4}.
\end{align}
Similarly, to prove a bound on the probability \eqref{eq:cond2} we use \eqref{eq:privateboundupper} , the union bound and the Chernoff-Hoeffding bound.
\begin{align}
  \Pr(\ref{eq:cond2})  &= \Pr(\pMean_{a,n_a} \geq \Mean_a + c_{t,n_a} + h_{n_a}) \notag \\
  &= \Pr(\pMean_{a,n_a} \geq \eMean_{a,n_a} + h_{n_a}
  \lor \eMean_{a,n_a} \geq \Mean_a + c_{t,n_a})
  \notag \\
  &\leq
  \Pr(\pMean_{a,n_a} \geq \eMean_{a,n_a} + h_{n_a})
  + \Pr(\eMean_{a,n_a} \geq \Mean_a + c_{t,n_a})
  \notag \\
  &\leq \gamma + \exp(-4 \log t)  = \gamma + t^{-4} .
\end{align}

Let's choose $\gamma = t^{-4}$; this leads respectively to 

\begin{align}
\Pr(\ref{eq:cond1}) 
&\leq 2t^{-4} \\
\Pr(\ref{eq:cond2}) 
&\leq 2t^{-4}
\end{align}

Now consider the last condition \eqref{eq:cond3}. For this,  we want to find the minimum number $n$ for which event (\ref{eq:cond3}) is always false. Event (\ref{eq:cond3}) is false, implies that $\Delta_a > 2c_{t,n} + 2h_{n}$ where $\Delta_a = \mu_*-\mu_a$.
We observe that for $\Delta_a > 2c_{t,n} + 2h_{n}$ to hold, it is enough that the following two conditions hold
for any  $\lambda_0$ such that $0 < \lambda_0 < 1$.
\begin{align}
  \lambda_0 \Delta_a &> 2c_{t,n}
  \label{eq:cond3ucb}
  \\
  (1-\lambda_0) \Delta_a &> 2h_{n}
  \label{eq:cond3private}
\end{align}

From inequality (\ref{eq:cond3ucb}), we have 
\begin{align}
n \geq  \frac{8}{\lambda_0^2\Delta_a^2} \log(t).
\label{eq:firstsolution}
\end{align} Equation (\ref{eq:cond3private}) leads to  
\[
n \geq B \cdot \log(n) + B
\]

with
\begin{align*}
B &= \frac{\sqrt{8}}{\epsilon(1-\lambda_0)\Delta_a}     \cdot \ln{\frac{4}{\gamma}} \\
  &= \frac{\sqrt{8}}{\epsilon(1-\lambda_0)\Delta_a}  \cdot \ln(4t^{4}) 
\end{align*}

We can rewrite the inequality in a more familiar form:
\[
 e^{-(-\frac{1}{B}) n} \geq e \cdot n
\]

This is a standard transcendental algebraic inequality whose solution is given by the Lambert $W$ function.
So, 
\begin{align*}
n \geq -B \cdot W \left(\frac{-1}{e \cdot B}, -1 \right)   
\end{align*}

where $W(x, k)$ is the Lambert function of $x$ on branch $k$.
Note here that the branch is -1 and because $\frac{-1}{e} < \frac{-1}{e \cdot B} < 0 \quad \forall t > 1$, we are always guaranteed to find a real number.

By using the approximation of the Lambert function provided in section $3.1$ of \cite{Barry200095}, we can conclude that

\begin{align}
n &\geq -B \cdot W \left(\frac{-1}{e \cdot B}, -1 \right) \notag \\
&\approx -B \left(\ln(\frac{1}{e \cdot B}) - \frac{2}{0.3361} \right) \notag \\
&\geq B\left(\ln{B} +7 \right) \notag \\
&=  \frac{\sqrt{8} \cdot \ln(4t^{4})}{\epsilon(1-\lambda_0)\Delta_a}     \left(\ln\left(\frac{\sqrt{8} \cdot \ln(4t^{4})}{\epsilon(1-\lambda_0)\Delta_a}    \right) +7 \right) 
\label{eq:secondsolution}
\end{align}

Combining inequalities (\ref{eq:firstsolution}) and (\ref{eq:secondsolution}) yields, 
\begin{align*}
	n &\geq \max\left[B  \left(\ln\left(B    \right) +7 \right) , \frac{8}{\lambda_0^2\Delta_a^2} \log(t)\right]
\end{align*}
 
In summary,

\begin{align}
  T_a(s) &\leq \ceil*{\max\left[B  \left(\ln{B}  +7 \right), \frac{8}{\lambda_0^2\Delta_a^2} \log(t)\right] } + \notag \\ 
  &\sum_{t=1}^{\infty} \sum_{n = 1}^{t-1} \sum_{n_a = \ell}^{t-1} 4t^{-4}
  \notag \\
  &\leq 1+ \max\left(B  \left(\ln B  +7\right), \frac{8}{\lambda_0^2\Delta_a^2} \log{t}\right) + \sum_{t=1}^{\infty} 4t^{-2}\notag \\
  &\leq 1+ \max\left(B  \left(\ln B  +7\right), \frac{8}{\lambda_0^2\Delta_a^2} \log{t}\right) + \frac{2\pi^2}{3}
  \label{eq:final_bound}
\end{align}
which concludes the proof.

\subsection{Proof for Theorem \ref{theo:bound2}}

The proof of this theorem is similar to the one for theorem \ref{theo:regret_interval} with $\dpbound_n = \frac{1}{\epsilon} \cdot   \sqrt{8}(\log{n}) \cdot \ln{\frac{4}{\gamma}} \cdot \frac{1}{n} +  \frac{1}{\epsilon} \cdot   \sqrt{8} \cdot \ln{\frac{4}{\gamma}} \cdot \frac{1}{n}$. We make the same choice of $\lambda$ and $\gamma$. However, the minimum number $n$ compatible with these choices lead a transcendental equations in the same form as the one in the proof of theorem \ref{theo:bound}. 

\section{Proofs for \textit{UCB-Interval} Algorithm}
\begin{fact}{Differential privacy of the Laplace mechanism~(See Theorem 4 in \cite{dpbook})}
  For any real function $g$ of the data, a mechanism adding Laplace noise with scale parameter $\beta$ is $\Delta g / \beta $-differentially private, where $\Delta g$ is the $L1$ sensitivity of $g$.
  \label{fact:dp-laplace}
\end{fact}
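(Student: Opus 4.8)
The plan is to prove this by a direct, pointwise comparison of output densities, which is the standard analysis of the Laplace mechanism underlying the cited result. I would write the mechanism as $M(D) = g(D) + Y$, where $Y$ is drawn from $\Laplace(\beta)$, i.e.\ $Y$ has density $y \mapsto \frac{1}{2\beta}\exp(-\abs{y}/\beta)$. For a fixed data set $D$ the output $M(D)$ then has density $z \mapsto \frac{1}{2\beta}\exp(-\abs{z - g(D)}/\beta)$, and the whole argument reduces to controlling how this density moves when $D$ is replaced by a neighbour.

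First I would fix two neighbouring data sets $D \sim D'$ (in our setting, two reward sequences differing in at most one time step, as in Definition~\ref{def:dp}) and, for an arbitrary output point $z$, form the ratio of the two output densities:
\begin{equation*}
  \frac{p_{M(D)}(z)}{p_{M(D')}(z)} = \exp\!\left(\frac{\abs{z - g(D')} - \abs{z - g(D)}}{\beta}\right).
\end{equation*}
The key inequality is the reverse triangle inequality $\abs{z-g(D')} - \abs{z-g(D)} \le \abs{g(D) - g(D')}$, combined with the definition of the $L_1$ sensitivity $\Delta g = \max_{D \sim D'} \abs{g(D) - g(D')}$, which bounds the exponent by $\Delta g / \beta$. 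Hence the ratio is at most $e^{\Delta g/\beta}$ uniformly in $z$. To match the set-based formulation of Definition~\ref{def:dp}, I would then integrate this pointwise bound over an arbitrary measurable output set $S$:
\begin{equation*}
  \Pr(M(D) \in S) = \int_S p_{M(D)}(z)\dd z \le e^{\Delta g/\beta}\int_S p_{M(D')}(z)\dd z = e^{\Delta g/\beta}\,\Pr(M(D') \in S),
\end{equation*}
which is exactly $(\Delta g/\beta)$-differential privacy with $\delta = 0$.

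Since this is a textbook statement, there is no genuine obstacle in the core calculation; the only real care is the bookkeeping that will let this Fact drive Lemma~\ref{lemma:interval_dp_step}. There the privatised quantity is the empirical mean $g = s_a/\nat$ rather than the raw sum, so that a single reward (lying in $[0,1]$) changing by at most $1$ moves the mean by at most $\Delta g = 1/\nat$; with the noise scale read off from Algorithm~\ref{alt2} as $\beta = \nat^{v/2-1}$, the privacy parameter is $\Delta g/\beta = \nat^{-v/2}$, matching the claimed bound. The secondary subtlety I would flag is the vector case: if $g$ were vector-valued the same argument goes through with the $\ell_1$ norm in the exponent, but here $g$ is scalar, so the ordinary absolute value suffices and no extra coordinatewise argument is needed.
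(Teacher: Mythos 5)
Your proof is correct: the pointwise density-ratio argument with the triangle inequality, followed by integration over an arbitrary measurable output set, is exactly the standard proof of the Laplace mechanism's privacy — the paper itself states this as a Fact with no proof, deferring entirely to the cited Theorem 4 of \cite{dpbook}, and your argument is precisely what that reference contains. Your closing bookkeeping (sensitivity $1/\nat$ for the empirical mean, noise scale $\beta = \nat^{v/2-1}$, hence privacy $\nat^{-v/2}$) also matches exactly how the paper invokes this Fact in its proof of Lemma~\ref{lemma:interval_dp_step}.
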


\subsection{Proof of Lemma \ref{lemma:interval_dp_step}}

\begin{proof}
Indeed, for each arm, we are adding a Laplace noise of mean 0 and scale $\nat^{v/2-1}$ where $\nat$ is the number of times this arm has been played. As the sensitivity of the mean is $\frac{1}{\nat}$, we use the differential privacy of the Laplace Mechanism (Fact~\ref{fact:dp-laplace}) to conclude the proof.
\end{proof}

\subsection{Proof of Theorem   \ref{theo:interval_simple_privacy}  }

Similarly to the proof of Theorem \ref{theo:privacy_algo1},
the overall privacy of Algorithm \ref{alt2} will be the same as the overall privacy of the Mechanism computing the mean of the rewards received from a single arm. 

A new Laplace Mechanism is used to compute the mean of each arm. However a new mean is only released $t/f$ times (every $f$ time steps) after $t$ times steps where $f$ is the interval used. 

  According to the $k$-fold adaptive composition theorem (III-3 in \cite{dworkrv10}), Algorithm \ref{alt2} will be 
$(\epsilon', \delta')$ differential private for any $\delta' \in (0, 1]$ with
\[\epsilon' \leq \min \left\{C,D \right\}
\]

\[C = \sum_{n \in \mathcal{I}_{\frac{1}{\epsilon}}^{n_a}} \frac{1}{\sqrt{n^{v}}}\]

\[D = \sum_{n \in \mathcal{I}_{\frac{1}{\epsilon}}^{n_a}} \frac{1}{\sqrt{n^{v}}} (e^{\frac{1}{\sqrt{n^{v}}}}-1) + \sqrt{\sum_{n \in \mathcal{I}_{\frac{1}{\epsilon}}^{n_a}} \frac{2}{n^{v}} \ln(1/\delta')} \]

where $\mathcal{I}_{\frac{1}{\epsilon}}^{n_a} = \{ {\frac{1}{\epsilon}, \frac{2}{\epsilon}, \frac{3}{\epsilon}, \cdots n_a} \}$

We have

\begin{align}
D & = \left( \sum_{n \in \mathcal{I}_{\frac{1}{\epsilon}}^{n_a}} \frac{1}{\sqrt{n^{v}}} (e^{\frac{1}{\sqrt{n^{v}}}}-1) + \sqrt{\sum_{n \in \mathcal{I}_{\frac{1}{\epsilon}}^{n_a}} \frac{2}{n^{v}} \ln(1/\delta')} \right)  \\
&\leq \left( \sum_{n \in \mathcal{I}_{\frac{1}{\epsilon}}^{t}} \frac{1}{\sqrt{n^{v}}} (e^{\frac{1}{\sqrt{n^{v}}}}-1) + \sqrt{\sum_{n \in \mathcal{I}_{\frac{1}{\epsilon}}^{t}} \frac{2}{n^{v}} \ln(1/\delta')} \right)\label{line:adapt0}
\\
&\leq \left(\epsilon \sum_{n=1}^{t} \frac{1}{\sqrt{n^{v}}} (e^{\frac{1}{\sqrt{n^{v}}}}-1) + \sqrt{\epsilon\sum_{n=1}^{t} \frac{2}{n^{v}} \ln(1/\delta')} \right)\label{line:adapt1}
\end{align}

\begin{align}
	C &= \sum_{n \in \mathcal{I}_{\frac{1}{\epsilon}}^{n_a}} \frac{1}{\sqrt{n^{v}}}\\
	&\leq \sum_{n \in \mathcal{I}_{\frac{1}{\epsilon}}^{t}} \frac{1}{\sqrt{n^{v}}} \label{line:simple0}\\
	&\leq \epsilon \sum_{n =1}^{t} \frac{1}{\sqrt{n^{v}}}\label{line:simple1}
\end{align}

which concludes the proof.

\begin{lemma}[Values of the interval of release of the means]
	The interval $f$ by which we should update the mean follows a series such that $f_n = W_{n+1} - W_n$ with:
	\begin{align*}
			\left\{ \begin{array}{ll}
				W_0 = 0 \\
				W_{n+1} \text{ is either } x \text{ or } y \text{ with }\\
				x = \displaystyle \inf_{x' \in \mathbb{N}} \left\{x' \geq W_n + 1 : \sum_{i=W_{n} + 1}^{x'} \frac{1}{\sqrt{i^v}} \geq \frac{1}{\epsilon \sqrt{{x'}^{v}}} \right\}\\				
				y = 	\displaystyle \inf_{y' \in \mathbb{N}} \left\{y' \geq W_n + 1 : \sum_{i=W_{n} + 1}^{y'} \frac{1}{{i^v}} \geq \frac{1}{\epsilon {y'}^{v}} \right\}		
			\end{array} \right.
	\end{align*}
	
	Furthermore $f_n$ is always such that $f_n \leq \ceil*{\frac{1}{\epsilon}}$
	
	Note that only one of the expression for $W_{n+1}$ should be used up to the horizon T.
	
	\label{interval-frequency}
\end{lemma}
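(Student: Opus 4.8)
The plan is to treat the two recursions (the $x$ option and the $y$ option) by one and the same elementary estimate, and to establish the two assertions the statement actually requires: that each infimum is attained at a finite integer, so that the series $W_n$ and hence $f_n$ is well defined, and that $f_n \le \ceil*{\frac{1}{\epsilon}}$. Both facts rest only on the observation that the summands $i \mapsto i^{-v/2}$ and $i \mapsto i^{-v}$ are strictly decreasing for $v \in (1,1.5]$.

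First I would fix $W_n$ and, for the $x$-recursion, examine the partial sums $S(k) = \sum_{i=W_n+1}^{W_n+k} i^{-v/2}$. Because the summand is decreasing, each of the $k$ terms is at least the last one, $(W_n+k)^{-v/2}$, so $S(k) \ge k\,(W_n+k)^{-v/2}$. Comparing this with the threshold $\tfrac{1}{\epsilon}(W_n+k)^{-v/2}$ appearing in the definition of $x$, the defining inequality holds as soon as $k \ge 1/\epsilon$, hence certainly at $k=\ceil*{\frac{1}{\epsilon}}$. Thus the set in the definition of $x$ is nonempty, its infimum over $\mathbb{N}$ is a genuine minimum, and it satisfies $x \le W_n + \ceil*{\frac{1}{\epsilon}}$; therefore $f_n = x - W_n \le \ceil*{\frac{1}{\epsilon}}$. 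The argument for $y$ is verbatim after replacing $i^{-v/2}$ by $i^{-v}$, which is likewise decreasing, and yields the same bound.

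It then remains to record why this particular series is the interval one \emph{should} use, i.e. why it produces the factor $\epsilon$ in Theorem \ref{theo:interval_simple_privacy}. Writing $\epsilon'_i = i^{-v/2}$ for the per-release privacy cost of Lemma \ref{lemma:interval_dp_step}, the defining inequality for $x$ rearranges to $\epsilon'_{W_{n+1}} \le \epsilon \sum_{i=W_n+1}^{W_{n+1}} \epsilon'_i$, and the one for $y$ to $(\epsilon'_{W_{n+1}})^2 \le \epsilon \sum_{i=W_n+1}^{W_{n+1}} (\epsilon'_i)^2$. Since the blocks $[W_n+1, W_{n+1}]$ partition $\{1,\dots,n_a\}$, summing over $n$ telescopes to $\sum_n \epsilon'_{W_n} \le \epsilon \sum_{i=1}^{n_a} i^{-v/2}$ and $\sum_n (\epsilon'_{W_n})^2 \le \epsilon \sum_{i=1}^{n_a} i^{-v}$. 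These are exactly the per-arm quantities fed into the basic- and advanced-composition halves of the $k$-fold adaptive composition theorem used in Theorem \ref{theo:interval_simple_privacy}: choosing the $x$-series minimizes the basic-composition bound and the $y$-series the advanced one, which is precisely why only a single expression is used up to the horizon $T$.

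The result is elementary, so there is no deep obstacle; the only points needing care are confirming nonemptiness of each infimum set (supplied by the $\ceil*{\frac{1}{\epsilon}}$ witness) and keeping the telescoping clean when the last block overshoots the horizon, where one truncates at $n_a$ (or $T$) and notes that the truncated partial sum only decreases the left-hand side and hence preserves the inequalities. The most delicate conceptual step is matching the two recursions to the two terms of the composition bound rather than any calculation.
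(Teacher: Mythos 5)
Your proposal is correct and takes essentially the same route as the paper: the paper's (very terse) proof likewise observes that the conditions defining $x$ and $y$ are exactly what is needed to justify the passages \eqref{line:simple0}$\to$\eqref{line:simple1} and \eqref{line:adapt0}$\to$\eqref{line:adapt1} in the proof of Theorem~\ref{theo:interval_simple_privacy}, and that $\frac{1}{\epsilon}$ bounds each interval because a block of that length already satisfies the defining inequality. Your explicit monotonicity witness for nonemptiness of each infimum set and your block-telescoping argument simply make rigorous what the paper asserts in a single line.
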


\begin{proof}
	From the proof of theorem \ref{theo:interval_simple_privacy}, we can easily see that we can pass from lines \eqref{line:adapt0} to \eqref{line:adapt1} if the conditions on $y$ in Lemma \ref{interval-frequency} is verified.
	Similarly, we can pass from lines \eqref{line:simple0} to \eqref{line:simple1} if the conditions on $y$ in Lemma \ref{interval-frequency} is verified. In both cases, $\frac{1}{\epsilon}$ is an upper bound on the number $x - W_n$ and $y - W_n$ as the original interval used in both lines \eqref{line:adapt0} and \eqref{line:simple0} is $\frac{1}{\epsilon}$.
\end{proof}

\subsection{Proof of Corollary \ref{cor:alt2:dp}}
	
From theorem \ref{theo:interval_simple_privacy}, we know that Algorithm \ref{alt2} will be 
$(\epsilon', \delta')$ differential private for any $\delta' \in (0, 1]$, $ 1 < v \leq 1.5$ with
\[\epsilon' \leq \min \left\{C,D \right\}
\]

\[ C = \sum_{n=1}^{t} \frac{\epsilon}{\sqrt{n^{v}}}\]
\[ D = \epsilon \sum_{n=1}^{t} \frac{e^{\frac{1}{\sqrt{n^{v}}}}-1}{\sqrt{t^{v}}}  + \sqrt{\epsilon\sum_{n=1}^{t} \frac{2\ln\frac{1}{\delta'}}{n^{v}} }\]

It is easy to get an upper bound for $C$ using the integral test inequality giving:

$C \leq  \epsilon \frac{t^{1-v/2} -v/2}{1-v/2}$

We will now simplify $D$ using a standard approximation for exponential function.

\begin{align}
D & =  \epsilon \sum_{n=1}^{t} \frac{e^{\frac{1}{\sqrt{n^{v}}}}-1}{\sqrt{t^{v}}}  + \sqrt{\epsilon\sum_{n=1}^{t} \frac{2\ln\frac{1}{\delta'}}{n^{v}} }\\
&\leq \left(\epsilon \sum_{t=1}^{\infty} \frac{1}{\sqrt{t^{v}}} (e^{\frac{1}{\sqrt{t^{v}}}}-1) + \sqrt{\epsilon \sum_{t=1}^{\infty} \frac{2}{t^{v}} \ln(1/\delta')} \right)\\
&\leq  \left(\epsilon \sum_{t=1}^{\infty} 2 (\frac{1}{\sqrt{t^{v}}})^2 + \sqrt{\epsilon\sum_{t=1}^{\infty} \frac{2}{t^{v}} \ln(1/\delta')} \right)\\
&= \left( 2 \epsilon \zeta(v) + \sqrt{2 \epsilon \zeta(v) \ln(1/\delta')} \right)
\end{align}

where $ \zeta$ is the Riemann zeta function.

\subsection{Proof of Corollary \ref{alt2_cor}}

The proof is immediate from corollary \ref{cor:alt2:dp}. It is obtained by inverting the term using the Riemann zeta function in corollary  \ref{cor:alt2:dp}.

\subsection{Proof of Theorem \ref{theo:regret_interval}}

\begin{fact}[Fact 3.7 in~\cite{dpbook}]
	If $Y \sim \Laplace(b)$ then
	\begin{equation}
	\Pr(|Y| \geq b \ln \frac{1}{\gamma}) = \gamma.
	\end{equation}
	\label{fact:single-laplace}
\end{fact}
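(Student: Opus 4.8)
The plan is to prove this directly from the definition of the Laplace density, since the statement is an exact tail-probability identity for a single Laplace random variable. Recall that $Y \sim \Laplace(b)$ denotes the zero-mean Laplace law with scale parameter $b$, whose density is $f(y) = \frac{1}{2b}\, e^{-|y|/b}$ on $\mathbb{R}$. The only genuine modelling choice here is the convention that the scale is exactly $b$ (so that the exponent is $-|y|/b$ rather than $-|y|$ or $-b|y|$); this matches the usage elsewhere in the paper, where noise of scale $\frac{\log t}{\epsilon}$ or $\nat^{v/2-1}$ is added, and it is what makes the sensitivity-based privacy guarantees line up.

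First I would reduce the two-sided tail to a one-sided one using the symmetry of the density about $0$: for any threshold $t \ge 0$,
\[
\Pr(|Y| \geq t) = 2\,\Pr(Y \geq t).
\]
Next I would evaluate the one-sided tail by the elementary integral
\[
\Pr(Y \geq t) = \int_{t}^{\infty} \frac{1}{2b}\, e^{-y/b}\, \mathrm{d}y = \frac{1}{2}\, e^{-t/b},
\]
so that $\Pr(|Y| \geq t) = e^{-t/b}$.

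Finally I would substitute the specific threshold $t = b \ln\frac{1}{\gamma}$, which is nonnegative whenever $\gamma \in (0,1]$, and simplify:
\[
\Pr\!\left(|Y| \geq b \ln\tfrac{1}{\gamma}\right) = e^{-\ln(1/\gamma)} = e^{\ln \gamma} = \gamma,
\]
which is exactly the claimed identity. There is no real obstacle to overcome: the entire content is a one-line integration followed by cancellation of the logarithm and exponential, and the statement is an equality (not merely a bound) precisely because the Laplace tail is purely exponential with no polynomial corrections. The only point requiring any care is bookkeeping of the scale convention, so that the factor $e^{-t/b}$ comes out with $t/b$ in the exponent and the substitution produces $e^{\ln\gamma}$ cleanly.
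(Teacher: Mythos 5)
Your proof is correct and is exactly the standard argument: the paper itself gives no proof of this statement, citing it as Fact~3.7 of the differential privacy textbook, and the proof found there is the same one-line computation you give (symmetry of the density, integration of the exponential tail to get $\Pr(|Y|\geq t)=e^{-t/b}$, then substitution of $t=b\ln\frac{1}{\gamma}$). Your attention to the scale convention $f(y)=\frac{1}{2b}e^{-|y|/b}$ is the right point of care, since it is what makes the identity an exact equality and matches the noise scales ($\frac{\log t}{\epsilon}$, $\nat^{v/2-1}$) used elsewhere in the paper.
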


  First, let's ignore the effect of computing the means per interval
  and assume that we compute it at each time step after playing each
  arm $f_0$ times where $f_0 \leq
  \frac{1}{\epsilon}$ is the first value of the series $f$ defined in lemma \eqref{interval-frequency}.  As much of the proof is quite similar to that of 
  Theorem~\ref{theo:bound}, we shall omit some steps.

Similarly to the proof of Theorem \ref{theo:bound}, we will take a bad arm when one of the following 3 events happens:
\begin{align}
  \pMean^*_n &\leq {\Mean}^* - c_{t,n}
  \label{eq:cond1_alt2}
  \\
  \pMean_{a,n_a} &\geq {\Mean}_a +  c_{t,n_a}
  \label{eq:cond2_alt2}
  \\
  {\Mean}^{*} &< {\Mean}_a + 2c_{t,n}
  \label{eq:cond3_alt2}
\end{align}

Since we are adding Laplace noise, we can use Fact~\ref{fact:single-laplace} to show that:
\begin{align}
  \Pr\{\abs{Y-X} \geq \log(\frac{1}{\gamma}) n^{v/2-1}\} \leq \gamma
  \label{eq:privatebound_alt2}
\end{align}
Let $h_n = \log(\frac{1}{\gamma}) n^{v/2-1}$. Now we can bound the probability of events (\ref{eq:cond2_alt2}) using equation  (\ref{eq:privatebound_alt2}), the union bound and the Chernoff-Hoeffding bound. In order to be more precise, we use  $\pMean_{a, n_a}$ for $\hat{x}_a$ and $\eMean$ to denote the empirical mean $\bar{x}_a$ at time $t$, while $c_{t,n_a} = \sqrt{2 \log(t) / n_a}$ as before.
\begin{align}
  \Pr(\ref{eq:cond2_alt2}) &= \Pr(\pMean_{a,n_a} \geq \Mean_a + c_{t,n_a}) \notag \\
  &=
  \Pr(\pMean_{a,n_a} \geq \eMean_{a,n_a} + h_{n_a}
  \lor \eMean_{a,n_a} \geq \Mean_a + c_{t,n_a} - h_{n_a})
   \notag \\
  &\leq
  \Pr(\pMean_{a,n_a} \geq \eMean_{a,n_a} + h_{n_a})
  + \Pr(\eMean_{a,n_a} \geq \Mean_a + c_{t,n_a} - h_{n_a})
  \notag \\
  &\leq \gamma +\Pr(\eMean_{a,n_a} \geq \Mean_a + c_{t,n_a} - h_{n_a}) .
  \\
  &\leq \gamma + \exp(-2n_a (c_{t,n_a} - h_{n_a})^2)
  \\
  & \leq \gamma + t^{-3.5}
  \\
  &\leq 2t^{-3.5}
\end{align}
In the above, we choose $h_{n_a} \leq \lambda_4 c_{t,n_a} $
with $\lambda_4 = 1 - \frac{\sqrt{3.5}}{2}$ and
$\gamma = t^{-3.5}$

Similarly, we prove a bound on the probability \eqref{eq:cond1_alt2}:
\begin{align}
  \Pr(\ref{eq:cond1_alt2}) &= \Pr(\pMean^*_n \leq \Mean^* - c_{t,n}) \notag \\
  &
  =
  \Pr(\pMean^*_n \leq \eMean^*_n - h_{n_a} \lor 
  \eMean^*_n \leq \Mean^* - c_{t,n} + h_{n_a})
  \notag \\
  &\leq
  \Pr(\pMean^*_n \leq \eMean^*_n - h_{n_a})
  + \Pr(\eMean^*_n \leq \Mean^* - c_{t,n} + h_{n_a} )
 \notag  \\
  &\leq
  \gamma + t^{-3.5}  = 2t^{-3.5}.
\end{align}

Now, Let's compute the minimum value for $n_a$ which is consistent with our choice of $\lambda_4$ and $\gamma$. It is easy to show that  $n_a \geq (\frac{2 - \sqrt{3.5}}{3.5\sqrt{2\log t}})^{\frac{2}{v-1}}$. And this number converges to 0 as t increases.

The final event \ref{eq:cond3_alt2} is exactly the standard UCB event and \ref{eq:cond3_alt2} will be false for all
$n_a \geq \frac{8}{\Delta_a^2} \log t$.

Now, we can easily see that the effect of the interval in the algorithm will not change this number. Indeed, because we are only updating the mean each $f$ steps, the number $n_a$ should be divided by $f$. Now, after updating the mean, we play this arm for a number of steps lower or equal to $f$. So, we should multiply $n_a$ by $f$ which cancel out the effect of the division.

In summary, (and similarly to Theorem \ref{theo:bound})
\begin{align}
  T_a(s) &\leq f_0 + \ceil*{ \max\left((\frac{2 - \sqrt{3.5}}{3.5\sqrt{2\log t}})^{\frac{2}{v-1}}, \frac{8}{\Delta_a^2} \log t\right) } + \notag \\
  & \qquad \sum_{t =1}^{\infty} \sum_{n = 1}^{t-1} \sum_{n_a =\ell}^{t-1} 4t^{-3.5}
  \notag 
\\
   &\leq f_0+ \ceil*{ \frac{8}{\Delta_a^2} \log t} +   \sum_{t=1}^{\infty}  4t^{-1.5}
 \notag \\ 
  &\leq f_0 + 1 + \frac{8}{\Delta_a^2} \log t + 4 \zeta(1.5)
  \notag \\ 
  &\leq f_0 + 1 +\frac{8}{\Delta_a^2} \log t + 4 \zeta(1.5)
  \label{eq:final_bound_alt2}
\end{align}

\begin{lemma}[Improved~\cite{chan2010private} bound]
  For any $\gamma \leq n^{-b}$, where $b > 0$, Chan's hybrid mechanism is $\epsilon$-differential private and has an error bounded with probability at least $1 - \gamma$ by $\frac{\sqrt{8}}{\epsilon} \log(\frac{4}{\gamma}) \log n + \frac{\sqrt{8}}{\epsilon} \log(\frac{4}{\gamma})$.
\label{lem:chan-improved}
\end{lemma}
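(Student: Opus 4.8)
The privacy claim requires no new work: Chan, Shi and Song already establish that the hybrid mechanism is $\epsilon$-differentially private after any number of steps, so I would simply cite their result and concentrate entirely on the error bound. The plan is to open up the structure of the released statistic. At a time step $n$ lying in the dyadic block $[2^k, 2^{k+1})$, the released sum is the logarithm mechanism's noisy estimate of the sum of the first $2^k$ rewards, plus the binary mechanism's noisy estimate of the sum over $[2^k, n]$. The former carries a single $\Laplace(1/\epsilon)$ perturbation, while the latter is a sum of at most $\log n$ independent $\Laplace(\log n/\epsilon)$ perturbations, one per node used in the dyadic decomposition of the prefix. Hence the total additive error is $|Y_0 + \sum_{i=1}^{k} Y_i|$ with $Y_0 \sim \Laplace(1/\epsilon)$ and $Y_i \sim \Laplace(\log n/\epsilon)$, all independent and $k \leq \log n$.

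For the single term $Y_0$ I would invoke Fact~\ref{fact:single-laplace} directly, which gives $|Y_0| \leq \frac{1}{\epsilon}\log(2/\gamma)$ with probability at least $1-\gamma/2$; bounding the constant crudely this produces the additive constant term $\frac{\sqrt 8}{\epsilon}\log(4/\gamma)$. For the sum $\sum_{i=1}^{k} Y_i$ I would apply a Bernstein-type concentration inequality for sums of independent Laplace variables (as in Chan et al.), whose tail has a Gaussian regime governed by $\nu = \sqrt{\sum_i b_i^2} = \frac{(\log n)^{3/2}}{\epsilon}$ and an exponential regime governed by $b_{\max} = \frac{\log n}{\epsilon}$, with the crossover near $\lambda \approx \frac{\nu^2}{b_{\max}} \approx \frac{(\log n)^2}{\epsilon}$.

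The decisive step --- and the place where the $\sqrt{\log n}$ saving over the earlier $\frac{1}{\epsilon}\log(1/\gamma)\log^{3/2} n$ bound comes from --- is to show that the hypothesis $\gamma \leq n^{-b}$ forces us into the \emph{exponential} tail regime. Indeed $\gamma \leq n^{-b}$ gives $\log(1/\gamma) \geq b\log n$, so the target deviation $\lambda = \frac{\sqrt 8}{\epsilon}\log(4/\gamma)\log n$ exceeds the crossover value $\frac{\sqrt 8\,(\log n)^2}{\epsilon}$ whenever $\log(4/\gamma) \geq \log n$; in that regime the tail is linear in $\lambda$ and scaled by $b_{\max}$, so setting the bound equal to $\gamma/2$ yields a deviation of order $b_{\max}\log(1/\gamma) = \frac{\log n}{\epsilon}\log(1/\gamma)$ rather than $\nu\sqrt{\log(1/\gamma)} = \frac{(\log n)^{3/2}}{\epsilon}\sqrt{\log(1/\gamma)}$. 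Tracking the constant through the tail inequality gives exactly the $\frac{\sqrt 8}{\epsilon}\log(4/\gamma)\log n$ term.

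Combining the two pieces by the triangle inequality and a union bound over their failure probabilities (each $\gamma/2$) then yields the claimed bound with total failure probability at most $\gamma$. I expect the main obstacle to be purely quantitative: pinning down the explicit constant $\sqrt 8$ and the argument $4/\gamma$ by carefully choosing which branch of the Laplace-sum concentration inequality to use, and verifying that the regime condition $\log(4/\gamma) \geq \log n$ is genuinely met for \emph{all} $n$ under $\gamma \leq n^{-b}$ rather than only asymptotically, so that the stated non-asymptotic bound holds uniformly.
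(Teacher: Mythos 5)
Your proposal is correct and is essentially the paper's own argument: the paper's (two-sentence) proof consists of exactly your two ingredients, namely (i) that under $\gamma \leq n^{-b}$ one can invoke the alternative, exponential-regime concentration bound for a sum of independent Laplace variables, yielding the $\frac{\sqrt{8}}{\epsilon}\log(\frac{4}{\gamma})\log n$ term from $b_{\max}\approx \log n/\epsilon$, and (ii) that conditionally on the previously released sums the error decomposes directly into the Logarithm mechanism's single $\Laplace(1/\epsilon)$ noise plus the Binary mechanism's at most $\log n$ noises, so no composition theorem is needed. The regime-condition caveat you flag at the end (whether $\log(4/\gamma)\geq\log n$ holds for all $n$ when $b$ is small) is a genuine subtlety, but it is one the paper's proof leaves equally unaddressed, and it is harmless in the paper's own applications where $\gamma = t^{-4}$ or $t^{-3.5}$.
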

\begin{proof}
Our improvement is based on two observations. Firstly, if $\gamma \leq n^{-b}$ then we can use the alternative concentration bound for the sum of the Laplace distribution. Secondly, we note that given all the previous released sums, the current sum is only drawn from one of the two mechanisms. Thus there is no need to apply a composition theorem. 
\end{proof}

\end{document}